\newcommand{\FTx}[1]{{#1}} 
\definecolor{cadmiumgreen}{rgb}{0.0, 0.75, 0.40}
\definecolor{richcarmine}{rgb}{0.85,0.00,0.23}
\newcommand{\FT}[1]{\textcolor{black}{#1}} 
\newcommand{\AR}[1]{\textcolor{black}{#1}}
\newcommand{\ARR}[1]{\textcolor{black}{#1}}
\newcommand{\ARX}[1]{{#1}}
\newcommand{\PBX}[1]{{#1}}
\newcommand{\EA}[1]{\textcolor{black}{#1}}
\newcommand{\EAB}[1]{{#1}}
\newcommand{\EAX}[1]{{#1}}
\newcommand{\todo}[1]{\textcolor{magenta}{$*$ #1}}
\newcommand{\delold}[1]{}
\newcommand{\del}[1]{\textcolor{olive}{$\times$ #1}}
\newcommand{\dell}[1]{}
\def\wrt{wrt}
\def\resp{resp.}
\def\code#1{\texttt{#1}}
\newcommand{\plusSTAT}{\ensuremath{\stackrel{\mbox{\scalebox{1}{.}}}{+}}}
\newcommand{\minusSTAT}{\ensuremath{\stackrel{\mbox{\scalebox{1}{.}}}{-}}}
\newcommand{\Args}{\ensuremath{\mathcal{X}}}
\newcommand{\RV}{\ensuremath{\Args_r}}
\newcommand{\Influences}{\ensuremath{\mathcal{I}}}
\newcommand{\Inputs}{\ensuremath{\mathcal{A}}}
\newcommand{\Rels}{\ensuremath{\mathcal{R}}}
\newcommand{\Atts}{\ensuremath{\Rels_-}}
\newcommand{\Supps}{\ensuremath{\Rels_+}}
\newcommand{\SDAtts}{\ensuremath{\Rels_{\minusSTAT}}}
\newcommand{\SDSupps}{\ensuremath{\Rels_{\plusSTAT}}}
\newcommand{\CIRels}{\ensuremath{\Rels_!}}
\newcommand{\PIRels}{\ensuremath{\Rels_*}}
\newcommand{\SF}{\ensuremath{\sigma}}
\newcommand{\prop}{\ensuremath{{\pi}}}
\newcommand{\Obs}{\ensuremath{\mathcal{O}}}
\newcommand{\Cla}{\ensuremath{\mathcal{C}}}
\newcommand{\Dep}{\ensuremath{\mathcal{D}}}
\newcommand{\Val}{\ensuremath{\mathcal{V}}}
\newcommand{\expvar}{\ensuremath{e}}
\newcommand{\arga}{\ensuremath{x}}
\newcommand{\argb}{\ensuremath{y}}
\newcommand{\argc}{\ensuremath{z}}
\newcommand{\tuple}[1]{\ensuremath{\langle #1 \rangle}}
\newcommand{\set}[1]{\ensuremath{\{ #1 \}}}
\newcommand{\plusLIME}{\ensuremath{\stackrel{\mbox{\scalebox{.30}{LIME}}}{+}}}
\newcommand{\minusLIME}{\ensuremath{\stackrel{\mbox{\scalebox{.30}{LIME}}}{-}}}
\newcommand{\plusSHAP}{\ensuremath{\stackrel{\mbox{\scalebox{.30}{SHAP}}}{+}}}
\newcommand{\minusSHAP}{\ensuremath{\stackrel{\mbox{\scalebox{.30}{SHAP}}}{-}}}
\newcommand{\limefun}{\ensuremath{v_{L\!I\!M\!E}}}
\newcommand{\AttsSHAP}{\ensuremath{\Rels_{\minusSHAP}}}
\newcommand{\SuppsSHAP}{\ensuremath{\Rels_{\plusSHAP}}}
\newcommand{\AttsLIME}{\ensuremath{\Rels_{\minusLIME}}}
\newcommand{\SuppsLIME}{\ensuremath{\Rels_{\plusLIME}}}
\newtheorem{theorem}{Theorem}
\newtheorem{definition}{Definition}
\newtheorem{proposition}{Proposition}
\newtheorem{property}{Property}
\title{\EAB{Influence}-Driven Explanations\\for Bayesian Network Classifiers\\
}
\author {
        Antonio Rago$^1$,
        Emanuele Albini$^1$,
        Pietro Baroni$^2$ and
        Francesca Toni$^1$\\
\affiliations\large
    $^1$ Dept. of Computing, Imperial College London, UK \\
    $^2$ Dip.to di Ingegneria dell'Informazione, Universit\`{a} degli Studi di Brescia, Italy\\
\emails
    \{a.rago, emanuele, ft\}@imperial.ac.uk,
pietro.baroni@unibs.it
}
\begin{document}
\pagestyle{firstpage}
\maketitle

\begin{abstract}
	One of the most pressing issues in AI in recent years has been the need to address the lack of explainability of many of its models. 
	We focus on 
	explanations for \AR{discrete} Bayesian network classifiers (BCs), targeting greater transparency of their inner workings by including \emph{intermediate} variables in explanations, rather than just the input and output variables as is standard practice
	.
	The proposed \emph{\EAB{influence}-driven explanations} (IDXs) for BCs 
	are systematically generated using
	\ARX{the causal relationships between variables \emph{within the BC}, called influences, which are then categorised by logical requirements, called \emph{relation properties}, according to their behaviour.}
	\ARX{These relation}
	properties 
	\ARX{both provide} 
	guarantees 
	\FT{beyond} heuristic explanation methods 
	\ARX{and} allow the information underpinning an explanation to be tailored to a particular context's and user's requirements, e.g., IDXs may be \emph{dialectical} or \emph{counterfactual}
	. 
	We demonstrate IDXs' capability to explain various forms of BCs, e.g., naive or multi-label, binary or categorical, and also integrate recent approaches to explanations for BCs from the literature.
	We evaluate IDXs with theoretical \ARX{and} empirical 
	analyses, demonstrating their considerable advantages when compared with existing explanation methods.
\end{abstract}

\section{Introduction}\label{sec:intro}

The need for explainability has been one of the fastest growing concerns in AI 
\FT{of late}, driven by academia, industry and governments
, as various stakeholders become more conscious of the dangers posed by the widespread implementation of systems we do not fully understand. In response to this pressure, a plethora of explanations for AI methods have been proposed
, with 
\FT{diverse strengths and weaknesses}. 

Some 
approaches to Explainable AI are heuristic and model-agnostic
, e.g., \cite{Ribeiro_16,Lundberg_17}. \FT{Although model-agnosticism leads to broad} 
applicability and uniformity across contexts\FT{, the heuristic nature of these approaches 
 leads to some}
weaknesses\FT{, especially concerning user trust}
\cite{Ignatiev_20}. 
Another limitation, brought about in part by the model-agnosticism, is the fact that these explanations are restricted to representations of how inputs influence outputs, neglecting the influences of intermediate components of models.
This can lead to the undesirable situation where 
\FT{explanations of} the outputs from two \FT{very} diverse systems, e.g., a Bayesian network classifier (BC) and a neural model, performing the same task 
may be identical, despite 
completely different \FT{underpinnings}.
 This trend towards explanations focusing on inputs and outputs exclusively is not limited to model-agnostic explanations, however. 
 Explanations tailored towards specific AI methods are often \FT{also} restricted to inputs' influence on outputs, 
 e.g., the methods of \cite{Shih_18} for BCs or 
 of \cite{Bach_15} for neural networks.
Various methods have been devised for interpreting the intermediate components of neural networks (e.g., see \cite{Bau_17}), and \cite{Olah_18} have shown the benefits of identifying relations between these components and inputs/outputs.
\FT{Some methods exist for accommodating intermediate components in counterfactual explanations (CFXs) of BCs \cite{BC}.}

We propose the novel formalism of \emph{\EAB{influence}-driven explanations} (IDXs) for systematically providing various forms of explanations for a variety of BCs\FT{, and admitting CFXs as instances, amongst others}.
\ARX{The influences provide insight into the causal relationships between variables \emph{within the BC}, which are then categorised by logical requirements, called, \emph{relation properties}, according to their behaviour.}
The \ARX{relation} properties 
provide formal guarantees on the 
\FT{inclusion of \ARX{the required types of} influences \ARX{included} in IDXs.} 
IDXs 
\FT{are} \ARX{thus} fully customisable to the explanatory requirements of a particular application.
\FT{Our} contribution is threefold: \FT{we give}
    \AR{(1)} a systematic \ARR{approach} for \ARR{generating} IDXs from BCs, generalising existing work and offering 
    \FT{great} flexibility with regards to the BC model being explained and the nature of the 
    explanation;
    \AR{(2)} various instantiations of IDXs, including two based on the cardinal principle of \ARX{dialectical} monotonicity and one capturing \FT{(and extending) CFXs}
    ; and
    \AR{(3)}  theoretical \ARX{and} empirical 
    analyses,
    \ARX{showing the strengths of IDXs with respect to existing methods, along with illustrations of real world cases where the exploitation of these benefits may be particularly advantageous}.

\newpage
\section{Related Work}
\pagestyle{otherpages}
\label{sec:related} 

There are a multitude of methods in the literature for providing explanations \FT{(e.g., see the recent survey 
undertaken by \cite{guidotti})}.
Many 
are model-agnostic,
\FT{including:} \emph{attribution} methods such as \emph{LIME} \cite{Ribeiro_16} 
\FT{and} \emph{SHAP} \cite{Lundberg_17}, which assign each feature an \emph{attribution value} indicating 
\FT{its} contribution towards a prediction\FT{; \emph{CXPlain} \cite{Schwab_19}, using a causal model for generating attributions; and methods giving \emph{counterfactual explanations}, such as \emph{CERTIFAI} \cite{Sharma_20}, using a custom genetic algorithm, \emph{CLEAR} \cite{White_20}, using a local regression model, and \emph{FACE} \cite{Poyiadzi_20}, giving actionable explanations}.
\FT{Some of the model-agnostic methods rely upon symbolic representations, either to define explanations (e.g.,} \emph{anchors} \cite{Ribeiro_18} determine sufficient conditions (inputs) for \FT{predictions (outputs))}, 
\FT{or for logic-based counterparts of the underlying models from which explanations are drawn (e.g.,   \cite{Ignatiev_19_AAAI,Ignatiev_19})}. 
\FT{\AR{Due to} their model-agnosticism, all these methods restrict explanations to ``correlations'' between \emph{inputs} and \emph{outputs}.}
\FT{Instead, our focus on a specific method (BCs) allows us to define explanations} providing a deeper representation of how 
\FT{the model} is functioning via \FT{relations between input, output and} \emph{intermediate} variables. 

Regarding 
BCs, \cite{Shih_18} define \emph{minimum cardinality} and \emph{prime implicant} explanations \FT{(extended to any model in \cite{Darwiche_20})} to ascertain pertinent 
features based on a complete set of classifications, i.e.\EAB{,} a decision function representing the BC \cite{Shih_19}. 
These explanations are 
\FT{formally defined for binary} variables \FT{only} and again only 
\FT{explain outputs in terms of inputs}. 
\FT{CFXs \cite{BC} may include also intermediate variables and their relations: we will show that they can be \AR{captured} 
as instantiations of our method.}
Explanation trees for causal Bayesian networks \cite{Nielsen_08} represent  causal relations between variables
, 
and links between causality and explanation \EAB{have} also been studied by \cite{Halpern_01_IJCAI,Halpern_01_UAI}.
\ARX{While our influences are causal wrt the BC (as opposed to some model of the real world), we do not restrict the extracted relations in IDXs exclusively to those of a causal nature.}
Finally, \cite{Timmer_15} u
se support graphs (argumentation frameworks with support relations) as explanations showing the interplay between variables (as we do) in Bayesian networks\FT{, but (differently from us) commit to specific types of relations.} 

\FT{We will explore instances of our method giving  \emph{dialectical} forms of explanations for BCs\ARX{, i.e., those which represent relationships between variables as being positive or negative}.  Various types of  argumentation-based, dialectical explanations have been defined in the literature, but 
in different contexts and of different forms than ours, e.g., }
\cite{Garcia_13} propose 
\FT{explanations as dialectical trees of arguments;}
\cite{Fan_15} explain the 
\emph{admissibility}  \cite{Dung_95} of arguments using 
 dispute trees\EAB{;} \FT{several, e.g., \cite{Teze_18,Naveed_18,rec}, draw dialectical explanations} 
to explain \FT{the outputs of}
recommender systems
, 
\FT{while others focus on argumentation-based explanations of} review aggregation \cite{RT}, 
decision-making \cite{Zeng_18} and 
scheduling \cite{Cyras_19}. 


\section{Bayesian Network Classifiers and Influences}

We \FT{first} define \AR{(discrete)} BCs \FT{and their \emph{decision functions}}
:

\begin{definition}\label{def:BC}
A \emph{BC} 
	is a tuple $\langle  \Obs, \Cla, \Val, \Dep, \Inputs \rangle$ such that:
	\begin{itemize}
	\item $\Obs$ is a (finite) set of \emph{observations};
	%
	\item $\Cla$ is a (finite) set of \emph{classifications}; we refer to $\Args=\Obs \cup \Cla$ as the set of \emph{variables};
	%
	\item $\Val$ is a set of sets such that for any $x \in \Args$ there is a unique 
	$V \in \Val$ associated to $x$, called 
	\emph{values} of $x$ ($\Val(x)$ for short); 
    \item $\Dep \subseteq \AR{\Args} \times \Args$ is 
    \FT{a} set of 
    \emph{conditional dependencies} 
    such that $\langle \Args, \Dep \rangle$ is an acyclic directed graph \FT{(we refer to this as the underlying \emph{Bayesian network})}; for any $\arga \in \Args$,  $\Dep(\arga) = \{ \argb \in \Args | \EA{(\argb, \arga)} \in \Dep \}$ are the \emph{\EA{parents}} of $\arga$;
    \item For each $\arga \in \Args$, each $\arga_i \in \Val(\arga)$ is equipped with a \emph{prior probability} $P(\arga_i) \in [0,1]$ where $\sum_{\arga_i\in \Val(\arga)} P(\arga_i) = 1$; 
    \item For each \EA{$\arga \in \Args$}, each $x_i \in \Val(x)$ is equipped with a set of \emph{conditional probabilities} 
    where \EA{if $\Dep(x) = \{ \argb, \ldots, \argc \}$, for every $\argb_m$, \ldots, $\argc_n \in \Val(\argb) \times \ldots \times \Val(\argc)$, we have $P(x_i | \argb_m$, \ldots, $\argc_n)$, again with $\sum_{x_i\in \Val(x)} P(x_i| \argb_m$, \ldots, $\argc_n) = 1$};
    \item $\Inputs$ is the set of all possible \emph{input assignments}: any $a \in \Inputs$ is a \FT{(possibly} partial\FT{)} mapping $a : \Args \mapsto \bigcup_{\arga \in \Args} \Val(x)$  such that, \FT{for every $x\in \Obs$, $a$ assigns a value $a(x) \in \Val(x)$ to $x$, and} 
    \FT{for every $x \in \Args$, for every $x_i\in \Val(x)$,} $P(x_i | a)$ \FT{is the \emph{posterior probability} of the value of $x$ being $x_i$, given $a$}.\footnote{Posterior probabilities may be  estimated 
    from the prior and conditional probabilities.
    Note that, if $a$ is defined for $x$ and $a(x)\! =\! x_i$, then $P(x_i | a)\! = \!1$ and\FT{, for all $x_j \in \Val(x) \setminus \{ x_i \}$,} $P(x_j | a)\! = \!0
    $.}
    
    %
    %
    \end{itemize}
    \FT{Then, the \emph{decision function} (of \AR{the} BC) is } $\SF: \Inputs \times \Args \mapsto \bigcup_{\arga \in \Args} \Val(\arga)$ 
    where, for any $a \in \Inputs$ and any $\arga \in \Args$, $\SF(a, \arga) = argmax_{\arga_i \in \Val(\arga)}{P(\arga_i | a )}$.
	
\end{definition}

\FT{Thus,} a BC 
consists of \emph{variables}
, which may be 
\emph{classifications} 
or \emph{observations}
, 
\emph{conditional dependencies} 
between 
\FT{them}, \emph{values} that can be ascribed to variables,
and 
associated probability distributions 
result\FT{ing} in a \emph{decision function}, i.e., a mapping from 
inputs (assignments of values to variables) to outputs (assignments of values to classifications). 
Note that, differently 
\FT{from} \cite{Shih_18,BC},  BCs are not 
\FT{equated} to decision functions\FT{, as} 
probabilistic information is 
\FT{explicit in Definition~\ref{def:BC}}. 

\FT{We will consider various concrete BCs throughout the paper, all special cases of Definition~\ref{def:BC} satisfying, in addition, an \emph{independence property} among the parents of each variable. For all these BCs, and in the remainder of the paper,  the 
}
\EA{
\textit{conditional probabilities} can be defined, for each $x \in \Args, 
x_i \in \Val(x),$ 
$y \in \Dep(x), 
y_m \in \Val(y)$, as  $P(x_i|y_m)$ with $\sum_{x_i\in \Val(x)} P(x_i| \argb_m) = 1$.
}
\FT{Specifically,}
\EA{
for single-label classification we use Naive Bayes Classifiers (NBCs), 
\FT{with} $\Cla = \{ c \}$ and $\Dep = \{ (c, x) | \arga \in \Obs \}$. For  multi-label classification we use a variant of Bayesian network-based Chain Classifiers (BCCs) \cite{ChainClassifiers}  in which leaves of the 
network are considered observations, the remaining variables classifications\FT{, and \PBX{every classification $c \in \Cla$ is} estimated} with a NBC in which the children of $c$ are the inputs.}

\FT{In the remainder of the paper, unless specified otherwise, we assume as given a generic  BC $\langle  \Obs, \Cla, \Val, \Dep, \Inputs \rangle$ satisfying the aforementioned independence property.}

\begin{figure}[t]
    \centering
    \includegraphics[width=0.48\textwidth]{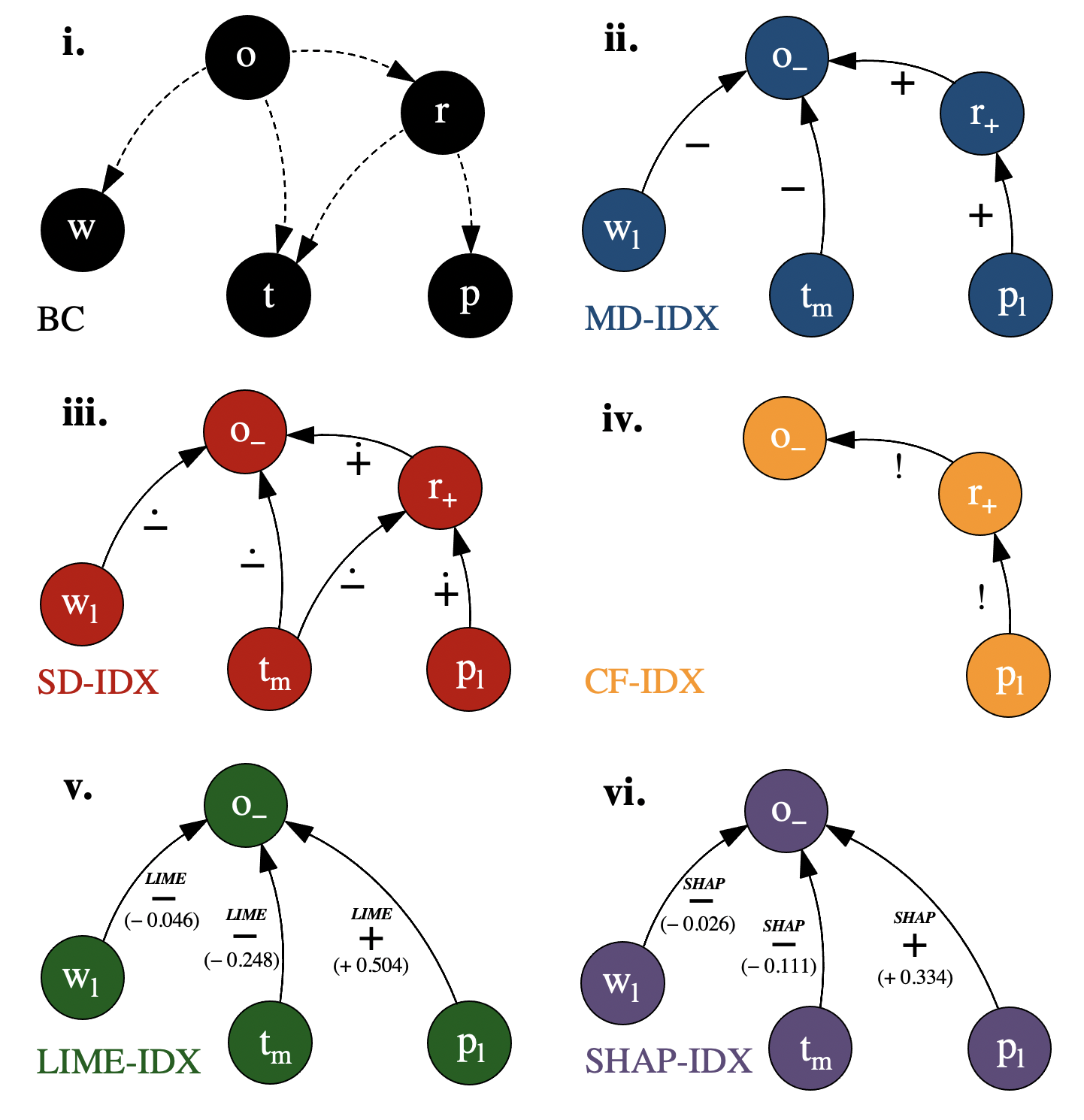}
	\caption{(i) \FT{Bayesian network for the} Play-outside BC, with conditional dependencies as dashed arrows \FT{(see Table \ref{table:probs} for the BC's probabilities and Table \ref{table:egBC} for its decision function $\SF$)}, and (ii-vi) corresponding explanations \FT{(shown as graphs, with relations given by edges labelled with their type)} for 
	input \emph{low wind} \FT{($w_l$)}, \emph{medium temperature} \FT{($t_m$)}, and \emph{low pressure} \FT{($p_l$)}, 
	with relevant variables\dell{ (with assigned values as subscripts)}, extracted relations and LIME/SHAP attribution values indicated. 
	\ARX{All explanations are instances of IDXs, illustrating our method's flexibility and potential 
	to accommodate different explanatory requirements
	.}
    \label{fig:egBC}}
\end{figure}
For illustration, consider the \emph{play-outside} \EA{BCC} in Figure \ref{fig:egBC}i \EAB{(for now ignoring the other subfigures, which will be introduced later)}, in which 
classifications \emph{play \underline{o}utside} and \emph{\underline{r}aining} are determined 
\FT{from} observations \emph{\underline{w}ind}, \emph{\underline{t}emperature} and \emph{\underline{p}ressure}.
\FT{Here,} $\Cla \!= \! \{ o, r \}$, $\Obs \!= \! \{ w, t, p \}$ and $\Dep$ \FT{is
} \EAB{as} in the figure.
Then, let $\Val$ be such that $\Val(w) = \Val(t) = \{low, medium, high\}$, $\Val(p) = \{low, high\}$ and $\Val(r) =\Val(o) = \{-, +\}$, i.e., $w$ and $t$ are categorical while $p$, $r$ and $o$ are binary.
Table \ref{table:probs} shows the prior and conditional probabilities for this \EA{B}\FT{C}C 
\FT{leading, in turn, to posterior probabilities \EAX{ in Table \ref{table:egBC}}}. For example, 
\FT{for} input 
\emph{low wind}
, \emph{medium temperature}
and \emph{low pressure}
, 
\FT{the BCC's} 
posterior probabilites for \emph{raining} and \emph{play outside} \FT{may be calculated as (with values of variables as subscripts)}\footnote{We indicate with $\propto$ the normalized 
\EAB{posterior} probability (i.e., such that $\sum_{c_i \FT{\in} \Val(c)} P(c_i | \cdot ) = 1$). 
\EAB{$P(r_+ | t_m, p_l)$ is the posterior probability} of an NBC predicting $r$ from  observations $t$, $p$ as input whilst 
\EAB{$P(o_- | w_l, t_m, p_l)$} is that of an\EAB{other} NBC predicting $o$ from, as  inputs, observations $w$, $t$ and classification $r$ (as predicted by the first NBC).}
\EA{
$P(r_+ | t_m, p_l) \EAX{= 0.94} \!\propto\! P(r_+) \cdot P(t_m | r_+) \cdot P(p_l | r_+)$ and $P(o_- | w_l, t_m, p_l) \EAX{= 0.99} \!\propto\! P(o_-) \cdot P(w_l | o_-) \cdot P(t_m | o_-) \cdot P(r_+ | o_-)$}.
This \FT{gives} the decision function $\SF$ in Table \ref{table:egBC}, where the \FT{earlier} example input results in 
\FT{$r_+$ and $o_-$}.

\begin{table}[t]
\begin{small}
\begin{center}
\begin{tabular}{ccccccccccccc}
\hline
\multicolumn{1}{c}{} & 
\!\!\!$w_l$\!\!\! &
\!\!\!$w_m$\!\!\! &
\!\!\!$w_h$\!\!\! &
\!\!\!$t_l$\!\!\! &
\!\!\!$t_m$\!\!\! &
\!\!\!$t_h$\!\!\! &
\!\!\!$p_l$\!\!\! &
\!\!\!$p_h$\!\!\! &
\!\!\!$r_+$\!\!\! &
\!\!\!$r_-$\!\!\! &
\!\!\!$o_+$\!\!\! &
\!\!\!$o_-$\!\!\! \\
\hline
\!\!\!$P(\cdot)$\!\!\! & 
\!\!\!$.33$\!\!\! &
\!\!\!$.33$\!\!\! &
\!\!\!$.33$\!\!\! &
\!\!\!$.33$\!\!\! &
\!\!\!$.33$\!\!\! &
\!\!\!$.33$\!\!\! &
\!\!\!$.50$\!\!\! &
\!\!\!$.50$\!\!\! &
\!\!\!$.67$\!\!\! &
\!\!\!$.33$\!\!\! &
\!\!\!$.22$\!\!\! &
\!\!\!$.78$\!\!\! \\
\!\!\!$P(\EA{\cdot\, | \,r_+})$\!\!\! & 
$\times$ &
$\times$ &
$\times$ &
\!\!\!$.25$\!\!\! &
\!\!\!$.25$\!\!\! &
\!\!\!$.50$\!\!\! &
\!\!\!$.75$\!\!\! &
\!\!\!$.25$\!\!\! &
$\times$ &
$\times$ &
$\times$ &
$\times$ \\
\!\!\!$P(\EA{\cdot\, | \,r_-})$\!\!\! & 
$\times$ &
$\times$ &
$\times$ &
\!\!\!$.49$\!\!\! &
\!\!\!$.49$\!\!\! &
\!\!\!$.02$\!\!\! &
\!\!\!$.02$\!\!\! &
\!\!\!$.98$\!\!\! &
$\times$ &
$\times$ &
$\times$ &
$\times$ \\
\!\!\!$P(\EA{\cdot\, | \,o_+})$\!\!\! & 
\!\!\!$.48$\!\!\! &
\!\!\!$.26$\!\!\! &
\!\!\!$.26$\!\!\! &
\!\!\!$.26$\!\!\! &
\!\!\!$.72$\!\!\! &
\!\!\!$.02$\!\!\! &
$\times$ &
$\times$ &
\!\!\!$.02$\!\!\! &
\!\!\!$.98$\!\!\! &
$\times$ &
$\times$ \\
\!\!\!$P(\EA{\cdot\, | \,o_-})$\!\!\! & 
\!\!\!$.28$\!\!\! &
\!\!\!$.36$\!\!\! &
\!\!\!$.36$\!\!\! &
\!\!\!$.36$\!\!\! &
\!\!\!$.22$\!\!\! &
\!\!\!$.42$\!\!\! &
$\times$ &
$\times$ &
\!\!\!$.85$\!\!\! &
\!\!\!$.15$\!\!\! &
$\times$ &
$\times$ \\
\hline
\end{tabular}

\textbf{\textit{}}
\end{center}
\protect\caption{Prior and conditional probabilities 
\FT{for} the play-outside \EA{BCC estimated 
with Laplace smoothing ($\alpha = 0.1$)} \FT{from the dataset/decision function  $\SF$ in Table~\ref{table:egBC}}.} \label{table:probs}
\end{small}
\end{table}

\begin{table}[t]
\begin{small}
\begin{center}
\begin{tabular}{ccccccccc}
\hline
	\multicolumn{5}{c}{\!\!$\SF$\!\!} & \multicolumn{2}{c}{SD-\EAB{/\textbf{MD-}}IDX} & 
	\multicolumn{2}{c}{CF-IDX}  \\
\!\!\!$w$\!\!\! & 
\!\!\!$t$\!\!\! & 
\!\!\!$p$\!\!\! & 
\!\!\!$r$\!\!\! & 
\!\!\!$o$\!\!\! & 
\!\!$\dot{-}(\!r\!)$/$\dot{+}(\!r\!)$\!\!\!\! &
\!\!\!$\dot{-}(\!o\!)$/$\dot{+}(\!o\!)$\!\!\! &
\!\!\!$!(\!r\!)$/$*(\!r\!)$\!\!\!\! &
\!\!\!\!$!(\!o\!)$/$*(\!o\!)$\!\!\! \\
\hline
\!\!\!$l$\!\!\! & 
\!\!\!$l$\!\!\! & 
\!\!\!$l$\!\!\! & 
\!\!\!$+${\tiny$.94$}\!\!\! & 
\!\!\!$-${\tiny$.99$}\!\!\! &
$\{ t \}$/$\{ \textit{\textbf{p}} \}$ &
$\{ \textit{\textbf{w}} \}$/$\{ t, \textit{\textbf{r}}  \}$ &
$\{ p \}$/$\{  \}$ &
$\{ r \}$/$\{ t \}$ \\
\!\!\!$m$\!\!\! & 
\!\!\!$l$\!\!\! & 
\!\!\!$l$\!\!\! & 
\!\!\!$+${\tiny$.94$}\!\!\! & 
\!\!\!$-${\tiny$.99$}\!\!\! &
$\{ t \}$/$\{ \textit{\textbf{p}} \}$ &
\!\!\!$\{  \}$/$\{ w, t, \textit{\textbf{r}} \}$\!\!\! &
$\{ p \}$/$\{  \}$ &
\!\!\!$\{  \}$/$\{ w, t, r \}$\!\!\! \\
\!\!\!$l$\!\!\! & 
\!\!\!$m$\!\!\! & 
\!\!\!$l$\!\!\! & 
\!\!\!$+${\tiny$.94$}\!\!\! & 
\!\!\!$-${\tiny$.99$}\!\!\! &
$\{ t \}$/$\{ \textit{\textbf{p}} \}$ &
$\{ \textit{\textbf{w}}, \textit{\textbf{t}} \}$/$\{ \textit{\textbf{r}} \}$ &
$\{ p \}$/$\{  \}$ &
$\{ r \}$/$\{  \}$ \\
\!\!\!$m$\!\!\! & 
\!\!\!$m$\!\!\! & 
\!\!\!$l$\!\!\! & 
\!\!\!$+${\tiny$.79$}\!\!\! & 
\!\!\!$-${\tiny$.51$}\!\!\! &
$\{ t \}$/$\{ \textit{\textbf{p}} \}$ &
$\{ \textit{\textbf{t}} \}$/$\{  w, \textit{\textbf{r}} \}$ &
$\{ p \}$/$\{  \}$ &
$\{ r \}$/$\{ w \}$ \\
\!\!\!$h$\!\!\! & 
\!\!\!$l$\!\!\! & 
\!\!\!$l$\!\!\! & 
\!\!\!$+${\tiny$.79$}\!\!\! & 
\!\!\!$-${\tiny$.69$}\!\!\! &
$\{ t \}$/$\{ \textit{\textbf{p}} \}$ &
\!\!\!$\{  \}$/$\{ w, t, \textit{\textbf{r}} \}$\!\!\! &
$\{ p \}$/$\{  \}$ &
\!\!\!$\{  \}$/$\{ w, t, r \}$\!\!\! \\
\!\!\!$l$\!\!\! & 
\!\!\!$h$\!\!\! & 
\!\!\!$l$\!\!\! & 
\!\!\!$+${\tiny$.79$}\!\!\! & 
\!\!\!$-${\tiny$.51$}\!\!\! &
$\{  \}$/$\{ \textit{\textbf{t}}, \textit{\textbf{p}} \}$ &
$\{ \textit{\textbf{w}} \}$/$\{ \textit{\textbf{t}}, \textit{\textbf{r}} \}$ &
$\{  \}$/$\{ t, p \}$ &
$\{  \}$/$\{ r \}$ \\
\!\!\!$h$\!\!\! & 
\!\!\!$m$\!\!\! & 
\!\!\!$l$\!\!\! & 
\!\!\!$+${\tiny$.79$}\!\!\! & 
\!\!\!$-${\tiny$.81$}\!\!\! &
$\{ t \}$/$\{ \textit{\textbf{p}} \}$ &
$\{ \textit{\textbf{t}} \}$/$\{ w, \textit{\textbf{r}} \}$ &
$\{ p \}$/$\{  \}$ &
$\{ r \}$/$\{ w \}$ \\
\!\!\!$m$\!\!\! & 
\!\!\!$h$\!\!\! & 
\!\!\!$l$\!\!\! & 
\!\!\!$+${\tiny$.79$}\!\!\! & 
\!\!\!$-${\tiny$.91$}\!\!\! &
$\{  \}$/$\{ \textit{\textbf{t}}, \textit{\textbf{p}} \}$ &
\!\!\!$\{  \}$/$\{ w, \textit{\textbf{t}}, \textit{\textbf{r}} \}$\!\!\! &
$\{  \}$/$\{ t, p \}$ &
$\{  \}$/$\{ w, r \}$ \\
\!\!\!$h$\!\!\! & 
\!\!\!$h$\!\!\! & 
\!\!\!$l$\!\!\! & 
\!\!\!$+${\tiny$.79$}\!\!\! & 
\!\!\!$-${\tiny$.81$}\!\!\! &
$\{  \}$/$\{ \textit{\textbf{t}}, \textit{\textbf{p}} \}$ &
\!\!\!$\{  \}$/$\{ w, \textit{\textbf{t}}, \textit{\textbf{r}} \}$\!\!\! &
$\{  \}$/$\{ t, p \}$ &
$\{  \}$/$\{ w, r \}$ \\
\!\!\!$l$\!\!\! & 
\!\!\!$l$\!\!\! & 
\!\!\!$h$\!\!\! & 
\!\!\!$-${\tiny$.99$}\!\!\! & 
\!\!\!$+${\tiny$.99$}\!\!\! &
$\{  \}$/$\{ t, \textit{\textbf{p}} \}$ &
\!\!\!$\{  \}$/$\{ \textit{\textbf{w}}, t, \textit{\textbf{r}} \}$\!\!\! &
$\{ p \}$/$\{ t \}$ &
$\{ w, r \}$/$\{  \}$ \\
\!\!\!$m$\!\!\! & 
\!\!\!$l$\!\!\! & 
\!\!\!$h$\!\!\! & 
\!\!\!$-${\tiny$.99$}\!\!\! & 
\!\!\!$-${\tiny$.99$}\!\!\! &
$\{  \}$/$\{ t, \textit{\textbf{p}} \}$ &
$\{ t, \textit{\textbf{r}} \}$/$\{ w \}$ &
$\{ p \}$/$\{ t \}$ &
$\{ t \}$/$\{ w \}$ \\
\!\!\!$l$\!\!\! & 
\!\!\!$m$\!\!\! & 
\!\!\!$h$\!\!\! & 
\!\!\!$-${\tiny$.99$}\!\!\! & 
\!\!\!$+${\tiny$.99$}\!\!\! &
$\{  \}$/$\{ t, \textit{\textbf{p}}  \}$ &
\!\!\!\!$\{  \}$/$\{ \textit{\textbf{w}},\textit{\textbf{t}}, \textit{\textbf{r}} \}$\!\!\!\! &
$\{ p \}$/$\{ t \}$ &
\!\!\!$\{ r \}$/$\{ w, t \}$\!\!\! \\
\!\!\!$m$\!\!\! & 
\!\!\!$m$\!\!\! & 
\!\!\!$h$\!\!\! & 
\!\!\!$-${\tiny$.97$}\!\!\! & 
\!\!\!$+${\tiny$.99$}\!\!\! &
$\{ \}$/$\{ t, \textit{\textbf{p}} \}$ &
\!\!\!$\{ w \}$/$\{  \textit{\textbf{t}}, \textit{\textbf{r}} \}$\!\!\! &
$\{ p \}$/$\{ t \}$ &
$\{ t, r \}$/$\{  \}$ \\
\!\!\!$h$\!\!\! & 
\!\!\!$l$\!\!\! & 
\!\!\!$h$\!\!\! & 
\!\!\!$-${\tiny$.97$}\!\!\! & 
\!\!\!$-${\tiny$.99$}\!\!\! &
$\{  \}$/$\{ t, \textit{\textbf{p}} \}$ &
\!\!\!$\{ t, \textit{\textbf{r}} \}$/$\{ w \}$\!\!\! &
$\{ p \}$/$\{ t \}$ &
$\{ t \}$/$\{ w \}$ \\
\!\!\!$l$\!\!\! & 
\!\!\!$h$\!\!\! & 
\!\!\!$h$\!\!\! & 
\!\!\!$+${\tiny$.97$}\!\!\! & 
\!\!\!$-${\tiny$.99$}\!\!\! &
$\{ \textit{\textbf{p}} \}$/$\{ \textit{\textbf{t}} \}$ &
$\{ \textit{\textbf{w}} \}$/$\{ \textit{\textbf{t}}, \textit{\textbf{r}} \}$ &
$\{ t \}$/$\{  \}$ &
$\{  \}$/$\{ r \}$ \\
\!\!\!$h$\!\!\! & 
\!\!\!$m$\!\!\! & 
\!\!\!$h$\!\!\! & 
\!\!\!$-${\tiny$.97$}\!\!\! & 
\!\!\!$+${\tiny$.98$}\!\!\! &
$\{  \}$/$\{ t, \textit{\textbf{p}} \}$ &
$\{ w \}$/$\{ \textit{\textbf{t}}, \textit{\textbf{r}} \}$ &
$\{ p \}$/$\{ t \}$ &
$\{ t, r \}$/$\{  \}$ \\
\!\!\!$m$\!\!\! & 
\!\!\!$h$\!\!\! & 
\!\!\!$h$\!\!\! & 
\!\!\!$+${\tiny$.97$}\!\!\! & 
\!\!\!$-${\tiny$.95$}\!\!\! &
$\{ \textit{\textbf{p}} \}$/$\{ \textit{\textbf{t}} \}$ &
\!\!\!$\{  \}$/$\{ w, \textit{\textbf{t}}, \textit{\textbf{r}} \}$\!\!\! &
$\{ t \}$/$\{  \}$ &
$\{  \}$/$\{ w, r \}$ \\
\!\!\!$h$\!\!\! & 
\!\!\!$h$\!\!\! & 
\!\!\!$h$\!\!\! & 
\!\!\!$+${\tiny$.97$}\!\!\! & 
\!\!\!$-${\tiny$.98$}\!\!\! &
$\{ \textit{\textbf{p}} \}$/$\{ \textit{\textbf{t}} \}$ &
\!\!\!$\{  \}$/$\{ w, \textit{\textbf{t}},\textit{\textbf{r}} \}$\!\!\! &
$\{ t \}$/$\{  \}$ &
$\{  \}$/$\{ w, r \}$ \\
\hline
\end{tabular}

\textbf{\textit{}}
\end{center}
\protect\caption{\FT{Decision function $\SF$} \EAB{(with the probability of the most probable class)} and explanations \FT{(SD-IDX and CF-IDX, \EAB{that will be introduced in Section \ref{sec:IDX}})} for 
the play-outside BC \EAB{in Figure \ref{fig:egBC}i}, where, 
for any variable $x \in \Args$ and 
relation \AR{type $t$}, $\AR{t}(x) = \{ y \in \Args | (y, x) \in \Rels_{\AR{t}}\}$.
We indicate 
\FT{in bold variables} 
\FT{{in the corresponding monotonic attackers and supporters \EAB{sets} in the MD-IDX for this example}}
\EAB{(also introduced in Section \ref{sec:IDX})}. 
}\label{table:egBC}
\end{small}
\end{table}

Our method for generating explanations relies on modelling how the variables within a BC \emph{influence} one another. For this we adapt the following from \cite{BC}.

\begin{definition}\label{def:Influences}
The set of 
\emph{influences} 
is the \FT{(}acyclic\FT{)} relation 
	$\Influences = \{(\arga, c) \in \Args \times \Cla | (c,\arga) \in \Dep \}$. 
	%
\end{definition}

Influences 
thus indicate the direction of the inferences 
in determining the values of 
classifications, neglecting, for example, dependencies between observations \FT{as considered} in \emph{tree-augmented naive BCs} \cite{Friedman_97}. 
Note that other forms of influence\EAB{, which we leave to future work, }may be required 
for more complex BCs 
\FT{where} inferences are made in other ways, e.g., \FT{for} \emph{Markov Blanket-based BCs} \cite{Koller_96} 
\FT{where} inferences may 
\FT{follow} 
dependencies or \FT{be drawn} 
between variables not linked by 
\FT{them}. 
\EAB{Notice also that \PBX{the influences correspond to the structure of the model as it has been built or learned from data}, any imprecision in the construction of the model or bias in the data with respect to the real world will be reflected in the influences too}.

While 
\FT{the} 
explanations \FT{we will define} may 
\FT{amount to} non-shallow graph\FT{s}\EAB{\footnote{\EAB{A non-shallow graph is a graph 
\FTx{with one or more paths (}between \FTx{any} two vertices\FTx{)} of length $>1$.}}} of relations between 
variables, in order to 
\FT{integrate} other, shallow \FT{explanation} methods, \ARX{i.e.,} 
\FT{connecting only} inputs and outputs, we define a restricted form of 
\FT{influences}.

\begin{definition}\label{def:IOInfluences}
The set of 
\emph{input-output influences}
, where $\Cla_o \subseteq \Cla$ are \emph{outputs}, is the \FT{(}acyclic\FT{)} relation $\Influences_{io} = \Obs \times \Cla_o$.
\end{definition}

For illustration, in the running example in Figure \ref{fig:egBC}i, $\Influences = \{(w,o),(t,o),(r,o),(t,r),(p,r) \}$ and $\Influences_{io} = \{(w,o),(t,o),(p,o) \}$ 
\FT{for} $\Cla_o = \{ o \}$\AR{, while $\Influences_{io} = \{(w,r),$ $(t,r),(p,r),(w,o),(t,o),(p,o) \}$ 
\FT{for} $\Cla_o = \{ o, r \}$. Note that in the former $\Influences_{io}$ case, $r$ is neglected, while in the latter, the influence $(w,r)$ is extracted despite the fact that \textit{wind} cannot influence \textit{raining} in this BC, highlighting}
\FT{that using $\Influences_{io}$, instead of the full $\Influences$, \ARR{may have drawbacks} for non-naive BCs, except when the notions coincide, as characterised next:}\footnote{\FT{The proofs of all results are in 
\EA{Appendix A}.}}

\begin{proposition}\label{thm:influences}
    \FT{Given} outputs $\Cla_o \subseteq \Cla$, $\Influences = \Influences_{io}$
    iff
    $\Dep = \Cla_o \times \Obs$.
\end{proposition}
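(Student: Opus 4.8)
The plan is to prove the two directions of the biconditional separately, exploiting the observation that, by Definition~\ref{def:Influences}, $\Influences$ is exactly the \emph{reverse} of those conditional dependencies whose parent (source) is a classification.

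For the ``if'' direction I would simply substitute $\Dep = \Cla_o \times \Obs$ into the definition of $\Influences$. Since $(c,\arga) \in \Cla_o \times \Obs$ holds precisely when $c \in \Cla_o$ and $\arga \in \Obs$, and since $\Cla_o \subseteq \Cla$ and $\Obs \subseteq \Args$ make the side condition $(\arga,c) \in \Args \times \Cla$ automatic, the definition collapses to $\Influences = \{(\arga,c) \mid \arga \in \Obs,\ c \in \Cla_o\} = \Obs \times \Cla_o$, which is exactly $\Influences_{io}$ by Definition~\ref{def:IOInfluences}. This direction is a routine set computation and needs no structural assumption.

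For the ``only if'' direction I would start from $\Influences = \Influences_{io} = \Obs \times \Cla_o$ and reverse it. Writing $\Dep_\Cla = \{(c,\arga) \in \Dep \mid c \in \Cla\}$ for the sub-relation of $\Dep$ whose source is a classification, Definition~\ref{def:Influences} gives $\Influences = \{(\arga,c) \mid (c,\arga) \in \Dep_\Cla\}$ (the membership $\arga \in \Args$ being forced by $\Dep \subseteq \Args \times \Args$); swapping coordinates in the assumed equality then yields $\Dep_\Cla = \Cla_o \times \Obs$. The remaining, and decisive, step is to show $\Dep$ contains no further edges, i.e.\ that $\Dep = \Dep_\Cla$. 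For this I would invoke the structural property of the BCs under consideration (the NBC/BCC constructions of Definition~\ref{def:BC}), namely that observations are the \emph{leaves} of the underlying Bayesian network and hence are never parents: every $(\argb,\arga) \in \Dep$ has $\argb \in \Cla$, so $\Dep = \Dep_\Cla = \Cla_o \times \Obs$, as required.

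The main obstacle is precisely this last step. The subtlety is that $\Influences$, by its very definition, records nothing about dependencies whose parent is an observation (for instance an edge between two observations, or from an observation to a classification), so the equality $\Influences = \Influences_{io}$ on its own pins down only the classification-rooted part $\Dep_\Cla$ of $\Dep$ and cannot by itself exclude such edges. It is the leaf property of observations that closes this gap: without it one could append an observation-rooted edge to $\Dep$, breaking $\Dep = \Cla_o \times \Obs$ while leaving $\Influences$ unchanged. I would therefore state this property explicitly as the one extra ingredient the argument relies upon.
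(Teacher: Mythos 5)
Your proposal is correct and follows the same basic route as the paper's proof---both directions are handled by unfolding Definitions 2 and 3 and computing with sets---but your handling of the ``only if'' direction is genuinely more careful, in a way worth spelling out. The paper's proof of that direction reads, in full: from $\Influences = \Influences_{io}$, ``$\{ (x,c) \in \Args \times \Cla \mid (c,x) \in \Dep \} = \Obs \times \Cla_o$ and it follows that $\Dep = \Cla_o \times \Obs$''. That ``it follows'' is precisely the step you isolate as the decisive one: reversing the displayed equality only pins down the classification-rooted part of $\Dep$, and one must separately rule out edges of $\Dep$ whose source is an observation, since such edges leave $\Influences$ untouched by Definition 2. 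The paper keeps this implicit, and its one-line inference is sound only under the restriction to the concrete BC classes it works with (NBCs and BCCs), where observations are leaves of the underlying network and hence never parents---exactly the structural property you invoke. For a generic BC of Definition 1 the step would fail: a tree-augmented naive BC, which the paper itself mentions immediately after Definition 2, has observation-to-observation dependencies, so it satisfies $\Influences = \Influences_{io}$ while $\Dep \neq \Cla_o \times \Obs$. So your explicit appeal to the leaf property is not an optional extra ingredient but the very assumption that makes the paper's terse argument valid; your write-up simply makes visible what the paper's proof silently assumes.
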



\section{\EAB{Influence}-Driven Explanations}\label{sec:IDX}





\FT{We} introduce a general method for generating explanations of BCs, 
\FT{and then several instantiations thereof}.
Our explanations are constructed by categorising 
influences as specific \FT{types} 
of relations depending on 
\FT{the satisfaction of \ARX{relation} properties characterising those relations}, defined as follows.

\begin{definition}
\FT{Given 
influences $\Influences$,}
an \emph{explanation kit} is a finite set of pairs $\set{\tuple{t_1,\prop_1}, \ldots  \tuple{t_n,\prop_n}}$ 
where $\set{t_1, \ldots, t_n}$ is a set of \emph{relation types} and $\set{\prop_1, \ldots, \prop_n}$ is a set of \emph{relation properties}
\FT{with} $\prop_i: \Influences \times \Inputs \rightarrow \{ true, false \}$, 
\FT{for $i =1, \ldots, n$}.
\end{definition}
\FT{Intuitively, relation property $\prop_i$ is satisfied for $(x,y)\in \Influences$ and $a\in \Inputs$ iff $\prop((x,y),a) = true$.}

\begin{definition}
Given 
influences $\Influences$ and an explanation kit $\{ \tuple{t_1,\prop_1}, \ldots  \tuple{t_n,\prop_n}\}$, a\ARX{n} \emph{\ARX{influence}-driven explanation} (IDX) for 
\emph{explanandum} $\expvar \in \Cla$ with 
input assignment $a \in \Inputs$ is a tuple $\langle \RV,\Rels_{\AR{t_1}}, \ldots, \Rels_{\AR{t_n}} \rangle$ 
\FT{with}:\\
    $\bullet$ $\RV \subseteq \Args$ such that $\expvar \in \RV$; \\
    %
	$\bullet$ $\Rels_
	{\FT{t_1}}, \ldots  \Rels_
	{\FT{t_n}}\subseteq \Influences \cap (\RV \times \RV)$ such that for any $i=1 \ldots n$, for every $(\arga, \argb) \in \Rels_{\FT{t_i}}$, $\prop_i((\arga,\argb), a)=true$
	;\\
	%
	$\bullet$ $\forall \arga \in \RV$ there is a sequence $\arga_1, \ldots, \arga_k$, $k \geq 1$, such that $\arga_1=\arga$, $\arga_k=\expvar$, and $\forall 1 \leq i < k$ $(\arga_i, \arga_{i+1}) \in \Rels_{\FT{t_1}}\cup\ldots\cup\Rels_{\FT{t_n}}$.
\end{definition}

A\ARX{n} IDX is thus guaranteed to be a set of \emph{relevant} variables \FT{($\RV$)}, including the explanandum, connected to one another in a graph by the relations 
\FT{from} the explanation kit. \AR{We follow the findings of \cite{Nielsen_08} that only some of the variables in $\Args$ (potentially very few \cite{Miller_19}) may suffice to explain the 
\FT{explanandum's value}.}

\begin{algorithm}[ht]
    \begin{algorithmic}
        \Function{Parents}{$\expvar, \Influences$}
            \State $parents \gets \emptyset$
            \For{$(x, c) \in \Influences$}
                \If{$c == e$}
                    \State $parents \gets parents \cup \{ x \}$
                \EndIf
            \EndFor
            \State \Return{$parents$}
        \EndFunction\\
        \Function{XP}{$\Obs, \Cla, \Influences, \expvar, a, \{ \tuple{t_1,\prop_1}, \ldots , \tuple{t_n,\prop_n} \}$}
            \Comment{{Generate the explanation for the explanandum $\expvar \in \Cla$ given an {input assignment $a \in \Inputs$} using the explanation kit $\{ \tuple{t_1,\prop_1}, \ldots , \tuple{t_n,\prop_n} \}$ }}
            \State $\RV \gets \emptyset$
            \State $\Rels \gets \{\}$
            \For{$t_i \in \{ t_1, \ldots, t_n \}$}
                \State $\Rels[t_i] = \emptyset$
            \EndFor
            \For{$x \in \Call{Parents}{\expvar, \Influences}$}
                \For{$\prop_i \in \{ \prop_1, \ldots, \prop_n \} $}
                    \If{$\prop_i((x,\expvar),a)$}
                        \State $\RV \gets \RV \cup \{ x \}$
                        \State $\Rels[t_i] \gets \Rels[t_i] \cup \{ (x,\expvar) \}$
                    \EndIf
                \EndFor
               \State {$\widehat{\RV}, \widehat{\Rels} \gets$ \Call{XP}{$\Obs, \Cla, \Influences, x, a, \{ \tuple{t_1,\prop_1}, \ldots , \tuple{t_n,\prop_n} \}$}}
                \State{$\RV \gets  \RV \cup \widehat{\RV}$}
                \For{$t_i \in \{ t_1, \ldots, t_n \}$}
                    \State{$\Rels[t_i] \gets \Rels[t_i] \cup \widehat{\Rels[t_i]}$}
                \EndFor
            \EndFor
            \State \Return{$\RV, \Rels$}
        \EndFunction
    \end{algorithmic}
    \caption{Influence-Driven Explanations}\label{algo:algorithm}
\end{algorithm}

\EAB{The computational cost of \PBX{producing} IDXs \PBX{essentially depends on} the computational cost of computing the relation properties \EAX{(for each influence), namely, once an input assignment $a$ is given, $\prop_1((x,y),a), \ldots, \prop_n((x,y))$ for every influence $(x,y) \in \Influences$}. In fact, we can see from Algorithm~\ref{algo:algorithm} that the \EAX{rest of the }explanation process mainly amount\FTx{s} to a deep-first search in the graph of the influences. \EAX{We will discuss further the computational costs of our explanations in Section \ref{sec:empirical}.}}

\AR{We will now demonstrate the flexibility of 
\ARX{our approach} by instantiating}
various IDXs, which are 
\FT{illustrated} in Table \ref{table:egBC} and in Figures \ref{fig:egBC}ii-vi for the running example. In doing so, we will make use of the following \FT{notion}.

\begin{definition}\label{def:ModifiedInput}
Given a BC $\langle  \Obs, \Cla, \Val, \Dep, \Inputs \rangle$ with influences $\Influences$, \FT{a variable $x \in \Args$} and an input $a \in \Inputs$, \FT{the} \emph{modified input} $a'_{x_k} \in \Inputs$ 
\FT{by} $x_k \in$ 
\FT{$\Val(x)$} \FT{is such that, for any $z\in \Args$}:  
$a'_{x_k}(z) =x_k$ if $z = x$, and $a'_{x_k}(z) =a(z)$ otherwise.
\end{definition}
A modified input thus assigns a desired value \FT{($x_k$)} to a specified variable \FT{($x$)}
\FT{, keeping} the preexisting input assignments \FT{unchanged}. \AR{For example, if \FT{
$a \in \Inputs$ amounts to} \emph{low wind}, \emph{medium temperature} and \emph{low pressure} 
in the running example 
\FT{(i.e., $a(w) = l$, $a(t) = m$, $a(p) = l$)}, then $a_{w_h}' \in \Inputs$ refers to 
\emph{high wind}, \emph{medium temperature} and \emph{low pressure}.}

\subsection{\FT{Monotonically} Dialectical IDXs}\label{sec:MIDX}
Motivated by 
\FT{recent uses} of argumentation 
for explanation (
see Section \ref{sec:related}), 
\FT{we draw inspiration from 
\emph{bipolar argumentation frameworks} \cite{Cayrol:05} to define a \emph{dialectical} instance of the notion of explanation kit, with attack and support relations defined by imposing properties}
of \emph{dialectical monotonicity}
, 
\FT{requiring} that attackers (supporters) have a \EAB{negative (positive}, \resp) effect on the variables they influence. 
Concretely, we require that an influencer is an attacker (a supporter) if its assigned value 
minimises (maximises, \resp) the posterior probability of the influencee's current value (with all other influencers' values  unchanged):

\begin{definition}\label{def:DIDX}
A \emph{\EA{monotonically} dialectical explanation kit} is a pair $\set{\tuple{-,\prop_{-}}, \tuple{+,\prop_{+}}}$ 
\FT{with} relation \FT{types} of \emph{\AR{monotonic} attack} $-$ and \emph{\AR{monotonic} support} $+$ 
characterised by the following \FT{relation} properties $\prop_-$, $\prop_+$. For any $(\arga,\argb) \in \Influences$,  $a \in \Inputs$
:
 
    \noindent $\bullet$ $\prop_{-}((\arga,\argb),a)\!= true$ iff $\forall x_k \in \Val(x) \setminus \{ \SF(a, x) \}$:
    
        \hspace*{1cm} \(P(\SF(a, y) | a)
        < 
        P(\SF(a, y) | a'_{x_k}) \);
    
    \noindent $\bullet$ $\prop_{+}((\arga,\argb),a)\!= true$ iff $\forall x_k \in \Val(x) \setminus \{ \SF(a, x) \}$:
    
    \hspace*{1cm} \(
        P(\SF(a, y) | a)
        >
        P(\SF(a, y) | a'_{x_k})
        \).
 

\end{definition}

Thus, a \emph{\EA{monotonically} dialectical IDX (MD-IDX)} is a IDX 
\FT{drawn from}
a \AR{monotonically} dialectical 
kit.

\FT{For illustration, consider} 
the MD-IDX in Figure \ref{fig:egBC}ii\FT{: here } 
\FT{$p_l$ monotonically} supports (i.e., increases the probability of) 
$r_+$, which in turn \AR{monotonically} supports 
$o_-$; while,  
$w_l$ and 
$t_m$ reduce the chance of 
$o_-$, 
\FT{leading to a monotonic attack}.

\ARX{It should be noted that since the dialectical monotonicity requirement here is rather strong, for some BCs (in particular those with variables with large domains) the MD-IDX could be empty, i.e., \PBX{comprises only the explanandum}. 
We examine the prevalence of these relations for a range of datasets in Section \ref{sec:evaluation}.}
\PBX{This form of explanation is appropriate in contexts where the users prefer sharp explanations with monotonic properties, when available, and may accept the absence of explanations, when these properties are not satisfied.}

\subsection{Stochastically Dialectical IDXs}

\FT{We believe that}
\EA{
monotonicity is a \EAB{property} 
humans naturally expect from 
explanations 
}.
\FT{Nonetheless, monotonicity is a strong requirement that may lead, for some BCs and contexts, to very few influences playing a role in MD-IDXs.}
\AR{We now introduce a \FT{weaker form of dialectical} explanation
, where}
an influencer is an attacker (supporter) if the posterior probability of the influencee's current value is lower (higher, \resp) than the average of those resulting {from the influencer's other values}, weighted by their prior probabilities (while all other influencers' values remain unchanged):

\begin{definition}\label{def:SDIDX}
A \emph{stochastically dialectical explanation kit} is a pair
$\set{\tuple{\minusSTAT,\prop_{\minusSTAT}}, \tuple{\plusSTAT,\prop_{\plusSTAT}}}$ 
\FT{with} relation \FT{types} of \emph{stochastic attack} $\minusSTAT$ and \emph{stochastic support} $\plusSTAT$ 
characterised by the following \FT{relation} properties \FT{$\prop_{\minusSTAT}$, $\prop_{\plusSTAT}$}. For any $(\arga,\argb) \in \Influences$, $a \in \Inputs$:

    \noindent $\bullet$ $\prop_{\minusSTAT}((\arga,\argb),a)\!= true$ iff
    
    \hfill \(
        \!\!\!\!\!\!P(\SF(a, y) | a)
        \!<\!
        \frac{
        \sum\limits_{x_k \in \Val(x) \setminus \{ \SF(a, x) \}} \left[ P(x_k) \cdot
        P(\SF(a, y) | a'_{x_k}) \right]
        }{
        \sum\limits_{x_k \in \Val(x) \setminus \{ \SF(a, x) \}}  P(x_k) 
        }
    \);
    
    \noindent $\bullet$ $\prop_{\plusSTAT}((\arga,\argb),a)\!= true$ iff
    
    \hfill \(
        \!\!\!\!\!\!P(\SF(a, y) | a)
        \!>\! 
        \frac{
        \sum\limits_{x_k \in \Val(x) \setminus \{ \SF(a, x) \}} \left[ P(x_k) \cdot
        P(\SF(a, y) | a'_{x_k})
        \right]
        }{
        \sum\limits_{x_k \in \Val(x) \setminus \{ \SF(a, x) \}}  P(x_k) 
        }
\).
\end{definition}

\EAB{Then,} a \emph{stochastically dialectical IDX (SD-IDX)} is a IDX 
\FT{drawn from} a stochastically dialectical 
kit. \EA{SD-IDXs are \emph{stochastic} in that the\FT{y weaken the} monotonicity constraint 
\FT{(compared to} MD-IDX\FT{s}) 
by taking into 
\FT{account} the prior probabilities of the possible changes of the influencers. }

\FT{For illustration, t}he SD-IDX in Figure \ref{fig:egBC}iii 
\FT{extends the} MD-IDX \FT{in Figure \ref{fig:egBC}ii by} 
\FT{including} the negative (stochastic) effect which 
$t_m$ has on 
$r_+$.

\ARX{The weakening of the \FTx{monotonicity} requirement 
\FTx{for SD-IDXs} mean\FTx{s} that SD-IDXs will not be empty except in some special, improbable cases, e.g., when every influencing variable of the explanandum has all values with equal posterior probability.}
\PBX{We therefore expect this form of explanation to be appropriate in contexts where users are looking for explanations featuring some degree of dialectically monotonic behaviour in the influences, but do not require strong properties and then would prefer to receive a more populated IDX than MD-IDXs.}

\subsection{Counterfactual IDXs}

We \FT{can} also 
\FT{naturally instantiate explanation kits to define counterfactual explanations capturing the} CFXs of \cite{BC}
. \FT{For this} we 
\FT{use} two 
relation\FT{s}: one indicating influencers wh
\FT{ose} assigned value is critical to the influencee's current value, and another indicating
\PBX{influencers whose assigned value can potentially contribute to the change (together with the changes of other influencers)}. 

\begin{definition}\label{def:CFIDX}
A \emph{counterfactual explanation kit} is a pair $\set{\tuple{!,\prop_!}, \tuple{*,\prop_*}}$ 
\FT{with} relation \FT{types} of \emph{critical influence} $!$ and \emph{potential influence} $*$ 
characterised by the following \FT{relation} properties $\prop_!$, $\prop_*$. For any $(\arga,\argb) \in \Influences$,  $a \in \Inputs$:

    \noindent $\bullet$ $\prop_!(\!(\!\arga,\!\argb\!),a\!)\!\!=\!\!true$ iff 
        $\exists a' \!\!\in\!\! \Inputs$ such that $\SF(a'\!,\! \arga) \!\!\neq\!\! \SF(a, \arga)$ and $\forall \argc \!\in\! \Influences(\argb) \backslash \{\! \arga\! \}$ $\SF(\!a'\!,\! \argc\!) \!\!=\!\! \SF(\!a,\! \argc\!)$, and $\forall a' \!\!\in\! \Inputs$,
    $\SF(\!a,\! \argb\!) \!\!\neq\!\! \SF(\!a'\!,\! \argb\!)$.
    
    \noindent $\bullet$ $\prop_*\!(\!(\!\arga,\!\argb\!),a\!)\!=\!true$ iff $\prop_!\!(\!(\!\arga,\!\argb\!),a\!)\!=\!false$ and $\exists a', a'' \in \Inputs$ such that $\SF(a, \arga) = \SF(a', \arga) \neq \SF(a'', \arga)$, $\forall \argc \in \Influences(\argb) \backslash \{ \arga \}$ $\SF(a', \argc) = \SF(a'', \argc)$, and 
    $\SF(a, \argb) = \SF(a', \argb) \neq \SF(a'', \argb)$.

\end{definition} 

Thus a \emph{counterfactual IDX (CF-IDX)} is a\ARX{n} IDX 
\FT{drawn from} a counterfactual 
kit. 
\FT{For illustration, }
Figure \ref{fig:egBC}iv shows the CF-IDX for the running example, 
\FT{with} no potential influences 
\FT{and with critical influences indicating} that if 
\ARR{$p_l$ were to change (i.e., to $p_h$), this would force $r_-$ and, in turn, $o_+$.}
\dell{the \emph{pressure} were to change, this would force \emph{raining} to be negative and, in turn, \emph{play outside} to be positive
.}

\ARX{CF-IDXs are not built around the principle of dialectical monotonicity, and instead reflect the effects that changes to the values of variables would have on the variables that they influence.}
\PBX{We suggest that this form of explanation is appropriate when the users wish to highlight the factors which led to a prediction and \FTx{which} can be changed in order to reverse it.}

\subsection{\EA{Attribution Method Based} Dialectical IDXs}\label{sec:otherdialecticalIDX}

We now \FT{further show the versatility of the notion of explanation kit by} instantiat\FT{ing it to integrate} 
attribution method\ARX{s, e.g., LIME} 
\FT{(see Section~\ref{sec:related})}, 
\ARX{which are} widely used 
\FT{in academia} and industry.
To \FT{reflect \ARX{attribution methods'} 
focus on input-output variables, 
\ARX{these instances are} defined in terms of input-output influences $\Influences_{io}$ and outputs $\Cla_o$ as follows:} 

\begin{definition}
A \emph{LIME explanation kit} is a pair
$\set{\tuple{\minusLIME~\!\!\!\!\!,~\prop_{\minusLIME}},~\tuple{\plusLIME~\!\!\!\!\!,~\prop_{\plusLIME}}}$ 
with relation types 
\emph{LIME-attack} $\minusLIME$ and \emph{LIME-support} $\plusLIME$ characterised by the following relation properties 
$\prop_{\minusLIME}$, $\prop_{\plusLIME}$. For any $(\arga,\argb) \in \Influences_{io}$, $a \in \Inputs$:
    
     \noindent $\bullet$ $\prop_{\plusLIME}((x,y), a) = true 
            \text{ iff }
            \limefun(a, x, y) > 0$
     , and
     
      \noindent $\bullet$ $\prop_{\minusLIME}((x,y), a)  = true 
            \text{ iff }
            \limefun(a, x, y) < 0$,
    \\
    where $\limefun : \Obs \times \Inputs \times \Cla_o \mapsto \mathbb{R}$ is such that 
    $\limefun(a, x, y)$ is the value that LIME assigns to the observation $x$ with an input assignment $a$ with respect to the output value $\SF(a, y)$.
\end{definition} 

\EAB{Then}, a \emph{LIME-IDX} is a\ARX{n} IDX 
drawn from a LIME explanation kit. 
\FT{Note that LIME-IDXs are still \emph{dialectical}. We can similarly define
(dialectical) explanation kits and IDXs for other attribution methods (see Section~\ref{sec:related}), e.g.,}
 a \textit{SHAP explanation kit}  and resulting 
\emph{SHAP-IDX}. 
%
For illustration, Figures \ref{fig:egBC}v and \ref{fig:egBC}vi show 
\FT{the LIME-IDX and SHAP-IDX}, \resp, for the play-outside BC\FT{C, with input-output influences annotated with the respective attribution values}. %
\PBX{A notable difference is that LIME and SHAP also provide weightings on the relations (LIME contributions and Shapley values, resp.), while IDXs do not include any such scoring.
Enhancing IDXs with quantitative information is an important direction of future work. As an example, in SD-IDXs importance scores could be defined based on the difference between $P(\SF(a,y)|a)$ and its (weighted) expected probability with other inputs (the right-hand side of the inequalities in Definition \ref{def:SDIDX}).}

\FT{The restriction to input-output influences implies that} the intermediate variable \emph{raining} is not considered by 
these explanations. This may not seem essential in a simple example such as this, but in real world applications such as medical diagnosis, where BCs are particularly prevalent,  
\AR{
the inclusion of 
\FT{intermediate} information could be beneficial (e.g., see 
\FT{Figure~\ref{fig:IDXeg})}}. 
\PBX{We suggest that these forms of IDX are suitable when the users prefer explanations with a simpler structure and, in particular, are not concerned about the intermediate variables of the BC nor require the dialectical relations to satisfy dialectical monotonicity.}

\section{Evaluation}\label{sec:evaluation}

In this section we perform a thorough evaluation of IDXs via theoretical \ARX{and} empirical analyses.

\subsection{Theoretical Analysis}


Our first two propositions show the relationship and, in special cases, equivalence \PBX{between} MD-IDXs and SD-IDXs.

\begin{proposition}\label{thm:DIDXtoSDIDX}
    Given an MD-IDX $\langle\! \Args_r,\! \Atts, \! \Supps \!\rangle$ and an SD-IDX $\langle\! \Args'_r,\! \Rels_{\minusSTAT}\!,\! \Rels_{\plusSTAT} \!\rangle$ for an explanandum $e\!\! \in\!\! \Args_r \!\cap\! \Args'_r$ 
    \FT{and} input assignment $a \!\in\! \Inputs$, we have 
        $\Args_r \!\!\subseteq\!\! \Args'_r
        \text{, }
        \Atts \!\!\subseteq\!\! \Rels_{\minusSTAT} 
        \text{ and } 
        \Supps \!\!\subseteq\!\! \Rels_{\plusSTAT}$.
\end{proposition}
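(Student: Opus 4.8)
The plan is to reduce the entire statement to a single pointwise implication between relation properties: for every influence $(x,y) \in \Influences$ and every input $a \in \Inputs$, $\prop_{-}((x,y),a) = true$ implies $\prop_{\minusSTAT}((x,y),a) = true$, and symmetrically $\prop_{+}((x,y),a) = true$ implies $\prop_{\plusSTAT}((x,y),a) = true$. Once this is in hand, the three inclusions follow from the way IDXs are generated by Algorithm~\ref{algo:algorithm}: the canonical IDX collects, over the subgraph reachable from $e$, exactly those influences whose associated relation property holds. I would state at the outset that the proposition is read for these canonical IDXs, since it clearly fails for arbitrary ones (one could take the SD-IDX to be $\langle \{e\}, \emptyset, \emptyset\rangle$).

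First I would prove the property implication, which is the heart of the argument. Writing $q = P(\SF(a,y)\mid a)$ and, for each $x_k \in \Val(x) \setminus \{\SF(a,x)\}$, $q_k = P(\SF(a,y) \mid a'_{x_k})$, I observe that the right-hand side of the stochastic conditions in Definition~\ref{def:SDIDX} is precisely the convex combination $\sum_k \omega_k q_k$ with nonnegative weights $\omega_k = P(x_k)/\sum_j P(x_j)$ that sum to $1$. Here I assume the normalising denominator $\sum_j P(x_j) > 0$, i.e. $x$ has at least two values carrying positive prior mass; otherwise both properties degenerate and the weighted average is undefined, a case I would flag as vacuous. If $\prop_{-}((x,y),a)=true$ then $q < q_k$ for every $k$, so $\sum_k \omega_k q_k - q = \sum_k \omega_k (q_k - q) > 0$, since each summand is nonnegative and at least one weight is strictly positive; this is exactly $\prop_{\minusSTAT}((x,y),a)=true$. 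The support case is identical with the inequalities reversed.

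Next I would lift this to the relation and variable sets. Because $\Atts$ (resp. $\Supps$) consists of the reachable influences satisfying $\prop_{-}$ (resp. $\prop_{+}$), while $\Rels_{\minusSTAT}$ (resp. $\Rels_{\plusSTAT}$) collects all reachable influences satisfying $\prop_{\minusSTAT}$ (resp. $\prop_{\plusSTAT}$), the pointwise implication yields $\Atts \subseteq \Rels_{\minusSTAT}$ and $\Supps \subseteq \Rels_{\plusSTAT}$ as soon as every variable of $\Args_r$ is also reached in the SD-IDX. For $\Args_r \subseteq \Args'_r$, I would take any $x \in \Args_r$ and use the connectivity clause of the IDX definition to obtain a path $x = x_1, \ldots, x_k = e$ with each consecutive pair in $\Atts \cup \Supps$; by the relation implication each such pair lies in $\Rels_{\minusSTAT} \cup \Rels_{\plusSTAT}$, so the very same path witnesses reachability of $x$ from $e$ in the SD-IDX, placing $x \in \Args'_r$. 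A short induction along these paths handles the simultaneous growth of the variable set and the relation sets.

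The main obstacle is conceptual rather than computational: the inclusions only make sense for the canonical (maximal) IDXs produced by Algorithm~\ref{algo:algorithm}, so I must make that reading explicit, and I must treat the degenerate case in which $\Val(x)$ effectively reduces to a single value so that the weighted average in Definition~\ref{def:SDIDX} is well defined. The genuine calculation — the strict convex-combination inequality — is elementary; everything else is bookkeeping over the reachable influence subgraph.
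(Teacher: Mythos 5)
Your proof is correct and takes essentially the same route as the paper's: both reduce everything to the pointwise implication $\prop_{-}\Rightarrow\prop_{\minusSTAT}$ (dually $\prop_{+}\Rightarrow\prop_{\plusSTAT}$), established by the same elementary fact that a prior-weighted average of posteriors each strictly exceeding $P(\SF(a,y)\mid a)$ must itself strictly exceed it (the paper bounds the average below by the minimum, you phrase it as a convex combination), and then lift to the three inclusions via the connectivity clause of the IDX definition. Your two explicit caveats --- that the inclusions only hold under the maximal, algorithm-generated reading of IDXs, and that the weighted average requires $\sum_{x_k} P(x_k) > 0$ --- are genuine edge cases the paper's proof silently assumes away, so flagging them strengthens rather than changes the argument.
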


\EAB{Proposition~\ref{thm:DIDXtoSDIDX} shows that an MD-IDX (for a certain explanandum and input assignment) is always 
(element-wise) \FTx{a} subset of the corresponding SD-IDX.}

\EAB{As described in Section \ref{sec:MIDX}, \ARX{due to} 
the stronger assumption \ARX{required by} \emph{dialectical monotonicity} 
(when compared to its stochastic counterpart), there are 
two \ARX{predominant} factors that contribute to MD-IDX becoming an (increasingly) smaller\ARX{, and potentially empty,} subset of SD-IDX
: (1) the cardinalities of variables' domains\ARX{, i.e.,} the larger the domain the less likely \emph{dialectical monotonicity} is to hold; (2) the depth of the explanation, i.e., the deeper an explanation (i.e, the longer the path from the inputs to the explanandum is), the less likely it is for a variable to have a path to the explanandum.}
\EAB{\PBX{When all variables are binary}, MD-IDXs and SD-IDXs are equivalent as shown in the following.}

\begin{proposition}\label{thm:equivalence}
    Given an MD-IDX $\langle \Args_r, \Atts, \Supps \rangle$ and an SD-IDX $\langle \Args'_r,\! \Rels_{\minusSTAT},\! \Rels_{\plusSTAT} \rangle$ for 
    explanandum $e \in \Args_r \cap \Args'_r$ 
    \FT{and} 
    input assignment $a \in \Inputs$, if, for any $x \in \Args'_r \setminus \{ e \}$, $|\Val(x)| = 2$, then
        $\Args_r = \Args'_r
        \text{, }
        \Atts = \SDAtts 
        \text{ and }
        \Supps = \SDSupps$.
\end{proposition}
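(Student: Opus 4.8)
The plan is to derive the two required equalities by combining the inclusions already granted by Proposition~\ref{thm:DIDXtoSDIDX} with matching reverse inclusions, where the latter rest on the observation that, for binary influencers, the monotonic and stochastic relation properties become logically equivalent. Since Proposition~\ref{thm:DIDXtoSDIDX} already yields $\Args_r \subseteq \Args'_r$, $\Atts \subseteq \SDAtts$ and $\Supps \subseteq \SDSupps$, it suffices to establish the reverse inclusions $\Args'_r \subseteq \Args_r$, $\SDAtts \subseteq \Atts$ and $\SDSupps \subseteq \Supps$ under the binary hypothesis.

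The crux is a local equivalence of relation properties. Fix an influence $(x,y) \in \Influences$ with $|\Val(x)| = 2$ and write $\Val(x) \setminus \{ \SF(a,x) \} = \{ x_k \}$ for the unique non-selected value. In Definition~\ref{def:SDIDX} the two sums then range over this singleton, so the weighted average collapses (after cancelling $P(x_k)$) to the single term $P(\SF(a,y) | a'_{x_k})$; hence $\prop_{\minusSTAT}((x,y),a)$ reduces to $P(\SF(a,y)|a) < P(\SF(a,y)|a'_{x_k})$, which is exactly the singleton universal condition defining $\prop_-((x,y),a)$ in Definition~\ref{def:DIDX}, and symmetrically $\prop_{\plusSTAT}$ coincides with $\prop_+$. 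Thus on every binary influencer the stochastic and monotonic properties are interchangeable.

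Next I would lift this pointwise equivalence to the whole explanation by following the recursive construction of Algorithm~\ref{algo:algorithm}, arguing by induction on the depth of the search tree rooted at $e$. At each node $y$ already shown to be common to both IDXs, the algorithm inspects the same parents $x$; every such $x$ that gets added in the SD-IDX lies in $\Args'_r \setminus \{ e \}$ and is therefore binary by hypothesis, so by the local equivalence it satisfies $\prop_-$ (respectively $\prop_+$) precisely when it satisfies $\prop_{\minusSTAT}$ (respectively $\prop_{\plusSTAT}$). Consequently the two recursions add identical variables and identical relations, giving $\Args'_r \subseteq \Args_r$, $\SDAtts \subseteq \Atts$ and $\SDSupps \subseteq \Supps$; combined with Proposition~\ref{thm:DIDXtoSDIDX} this yields the three claimed equalities.

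The main obstacle is ensuring the binary hypothesis is in force exactly where it is invoked: the local equivalence is only applied to influencers $x$, never to $e$ itself, which is legitimate because $e$ is the root and, by acyclicity of $\Influences$, never appears as an influencer in its own explanation---this is precisely why the hypothesis excludes $e$ and imposes binarity only on $\Args'_r \setminus \{ e \}$. A minor technical caveat is the degenerate case $P(x_k) = 0$, which would render the stochastic weighted average ill-defined; I would dispose of it by assuming strictly positive priors, so that the denominator $\sum_{x_k} P(x_k)$ is nonzero and the singleton collapse above is valid.
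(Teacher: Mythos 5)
Your proof is correct and rests on exactly the same key observation as the paper's: when $|\Val(x)|=2$, the set $\Val(x)\setminus\{\SF(a,x)\}$ is a singleton $\{x_k\}$, so the prior-weighted average in Definition~\ref{def:SDIDX} collapses (after cancelling $P(x_k)$) to the single term $P(\SF(a,y)\,|\,a'_{x_k})$, making $\prop_{\minusSTAT}$/$\prop_{\plusSTAT}$ coincide with $\prop_{-}$/$\prop_{+}$ on every relevant influence. The only difference is in the lifting step: the paper concludes directly from this pointwise identity of relation properties that $\Atts=\SDAtts$, $\Supps=\SDSupps$ and hence $\Args_r=\Args'_r$, whereas you split each equality into two inclusions---one imported from Proposition~\ref{thm:DIDXtoSDIDX}, the reverse obtained by induction along the recursive construction of Algorithm~\ref{algo:algorithm}---which is more laborious but equally valid, and arguably makes explicit the maximality/canonicity of the IDXs that the paper's one-line conclusion tacitly assumes. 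Your two side remarks are also sound and address points the paper glosses over: acyclicity of $\Influences$ guarantees the explanandum never occurs as an influencer within its own explanation, so binarity is genuinely needed only on $\Args'_r\setminus\{e\}$; and $P(x_k)=0$ must be excluded for the weighted average (and for the paper's own cancellation of $P(x_k)$) to be well defined.
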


\ARX{We now characterise 
desirable behaviour of
\FT{dialectical} explanation kits, requiring that attackers and supporters have a monotonic effect on the posterior probability of the assigned values to variables that they influence.
Concretely, we characterise a \emph{dialectical} version of \emph{monotonicity} which requires that if the influence from a variable is classified as being a support (an attack) then its assigned value maximises (minimises, \resp) the posterior probability of the influenced variable's current value (with all other influencing variables' values remaining unchanged).}
\EAB{In other words, if we were to change the value of a variable supporting (attacking) another variable, then the posterior probability of the latter's current value would decrease (increase, \resp).}

\begin{property}\label{prop:monotonicity}
A dialectical explanation kit $\set{\!\tuple{\ARR{a},\!\prop_{\ARR{a}}}, \!\tuple{\ARR{s},\!\prop_{\ARR{s}}}\!}$\footnote{Here $\ARR{a}$ 
and $\ARR{s}$ 
are some form of attack and support, \resp, depending on the specific explanation kit; e.g., for \emph{stochastically} dialectical explanation kits $\ARR{a}=\minusSTAT$ and $\ARR{s}=\plusSTAT$. } satisfies \emph{dialectical monotonicity} iff for any 
\FT{dialectical} IDX $\langle \!\Args_r,\! \mathcal{R}_a,\! \mathcal{R}_s \!\rangle$ \FT{ drawn from the kit (}for an\FT{y} explanandum $e \!\!\in\!\! \Args_r$
, input assignment $a \!\in\! \Inputs$\FT{)}, it holds that 
    for any $(\!x,\!y\!) \!\in\! \Rels_a\! \cup\! \Rels_s$, if $a' \!\!\in\! \Inputs$ is such 
     that $\SF(\!a'\!,\!x\!) \!\!\neq\!\! \SF(\!a,\!x\!)$ and $\SF(\!a'\!,\!z\!) \!\!=\!\! \SF(\!a,\!z\!)$ $\forall z \!\!\in\! \Influences(y) \!\!\setminus\!\! \{\! x \!\}$, then:
    
    \noindent $\bullet$ if $(x,y) \in \mathcal{R}_a$ then 
        $P(\SF(a, y) | a')
        >
        P(\SF(a, y) | a)$;
        
    \noindent $\bullet$ if $(x,y) \in \mathcal{R}_s$ then 
        $P(\SF(a, y) | a')
        <
        P(\SF(a, y) | a)$.
\end{property}

\begin{proposition}\label{thm:monotonicity}
\FT{Monotonically d}ialectical explanation kits satisfy dialectical monotonicity, while stochastically dialectical, LIME and SHAP explanation kits do not.
\end{proposition}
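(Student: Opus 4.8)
The plan is to prove the two halves of Proposition~\ref{thm:monotonicity} by different means: a direct derivation from the defining inequalities for the monotonically dialectical kit, and one explicit counterexample for each of the other three kits. For the positive part I would fix an arbitrary monotonically dialectical IDX $\langle\Args_r,\Rels_-,\Rels_+\rangle$ and an edge $(x,y)\in\Rels_-\cup\Rels_+$, and show that the clauses of Definition~\ref{def:DIDX} entail the two implications of Property~\ref{prop:monotonicity}. The enabling fact is the independence property assumed of all BCs in the paper: the posterior $P(\SF(a,y)\mid a')$ of a fixed value of $y$ is a function only of the effective values $\SF(a',z)$ of the direct influencers $z\in\Influences(y)$. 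So, given $(x,y)\in\Rels_-$ and any $a'$ satisfying the hypotheses of Property~\ref{prop:monotonicity}, I set $x_k=\SF(a',x)\neq\SF(a,x)$; since $a'$ agrees with $a$ on every other influencer of $y$, the influencer profile of $a'$ coincides with that of the modified input $a'_{x_k}$ of Definition~\ref{def:ModifiedInput}, whence $P(\SF(a,y)\mid a')=P(\SF(a,y)\mid a'_{x_k})$. The required inequality $P(\SF(a,y)\mid a')>P(\SF(a,y)\mid a)$ is then exactly the $x_k$-instance of $\prop_-$, and the supporter case is symmetric via $\prop_+$.

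The step I expect to be the main obstacle is precisely the claim that $a'$ and $a'_{x_k}$ induce the same profile over $\Influences(y)$. This is immediate for naive BCs, where every influencer of a classification is an observation and altering one observation cannot change the effective value of another. It is more delicate for the chain classifiers, where an influencer $x$ of $y$ may itself influence a second element $z\in\Influences(y)$ (as $t$ does for $o$ via $r$ in the running example), so that a global modified input could propagate through $z$ and move $\SF(\cdot,z)$, whereas Property~\ref{prop:monotonicity} holds $z$ fixed. I would isolate this as a lemma and discharge it using the independence property, which reduces $P(\SF(a,y)\mid\cdot)$ to a function of the direct-influencer profile of $y$ alone: it therefore suffices to compare profiles rather than inputs, matching the monotonicity requirement to $\prop_-$ and $\prop_+$ on the single edge $(x,y)$.

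For the negative half I would exhibit a counterexample per kit. For the stochastically dialectical kit I would reuse the running example: the edge $(t,r)$ is a stochastic attack (it lies in $\Rels_{\minusSTAT}$, since the prior-weighted average of $P(r_+\mid a'_{t_k})$ over $t_k\in\{t_l,t_h\}$ strictly exceeds $P(r_+\mid a)$), yet switching $t$ from $t_m$ to $t_l$ leaves $P(r_+)$ unchanged rather than strictly increasing it (because $P(t_m\mid r_+)=P(t_l\mid r_+)$), so the attack clause of Property~\ref{prop:monotonicity} fails for $a'=a'_{t_l}$. This requires a variable with at least three values, consistent with Proposition~\ref{thm:equivalence}, under which binary domains force SD and MD to coincide and preclude the failure. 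For LIME and SHAP the argument is structural: their relation properties label an influence a supporter or attacker purely by the sign of $\limefun$ (respectively the Shapley value), a quantity obtained by aggregating over perturbations or coalitions and never referencing the per-value posteriors $P(\SF(a,y)\mid a'_{x_k})$; hence no mechanism enforces the monotonic behaviour Property~\ref{prop:monotonicity} demands. I would present a small BC (or inspect the annotated attribution values in Figures~\ref{fig:egBC}v and \ref{fig:egBC}vi) in which an input receives a positive attribution, labelling it a supporter, while some admissible value change of that input fails to decrease $P(\SF(a,y)\mid a')$, contradicting the support clause. A single such witness per method establishes non-satisfaction.
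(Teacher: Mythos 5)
Your decomposition --- a direct derivation for the monotonic kit plus one counterexample per remaining kit --- is what a rigorous proof should look like, and on the negative half it is sharper than what the paper actually does: the paper's entire proof is that monotonically dialectical kits are monotonic ``by inspection of Definition~\ref{def:DIDX}'', while the stochastic, LIME and SHAP kits fail the property ``as proved by the results in Table~\ref{table:monotonicity}'', i.e., the non-satisfaction claims rest on empirical violation counts (supplemented in Section~\ref{sec:empirical} by the windy/raining LIME instance and the Age/Disease SHAP instance). Your stochastic counterexample, by contrast, is analytic and checkable against the paper's own data: for $a$ with $a(w)=w_l$, $a(t)=t_m$, $a(p)=p_l$, Table~\ref{table:egBC} confirms that $(t,r)\in\Rels_{\minusSTAT}$ while $t$ is not a monotonic attacker of $r$, and since $P(t_l|r_+)=P(t_m|r_+)$ and $P(t_l|r_-)=P(t_m|r_-)$ in Table~\ref{table:probs}, switching to $t_l$ leaves $P(r_+|\cdot)$ unchanged, violating the attack clause of Property~\ref{prop:monotonicity}; your remark that any such witness needs $|\Val(x)|\geq 3$ is exactly what Proposition~\ref{thm:equivalence} dictates.

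Two gaps remain. First, for LIME and SHAP you give a plan, not a proof: observing that attribution signs ``never reference per-value posteriors'' shows only that monotonicity is not guaranteed by construction, not that it actually fails; the proposition asserts non-satisfaction, which needs an explicit BC, input and influence violating a clause, and you defer this to ``a small BC'' or the figures without producing one (the paper, for all its terseness, does supply witnesses). Second, and more seriously, the lemma carrying your positive half --- that the $a'$ of Property~\ref{prop:monotonicity} and the modified input $a'_{x_k}$ of Definition~\ref{def:ModifiedInput} induce the same profile over $\Influences(y)$ --- is false precisely in the BCC case you flag, and the independence property does not discharge it. Independence makes $P(\SF(a,y)|\cdot)$ a function of the values of $y$'s direct influencers, but it does not prevent $a'_{x_k}$ from moving $\SF(\cdot,z)$ for an influencer $z\in\Influences(y)$ that $x$ also influences: in the running example with $a(w)=w_l$, $a(t)=t_l$, $a(p)=p_h$ one has $\SF(a,r)=r_-$ yet $\SF(a'_{t_h},r)=r_+$, so Definition~\ref{def:DIDX}'s inequality for $(t,o)$ concerns the profile $(w_l,t_h,r_+)$ while Property~\ref{prop:monotonicity} demands it for $(w_l,t_h,r_-)$, a profile realisable by an $a'$ that assigns $r$ directly. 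Saying ``it suffices to compare profiles rather than inputs'' restates this problem rather than solving it. As written, your argument is airtight only for naive BCs, or under a reading of Definition~\ref{def:DIDX} in which $y$'s posterior is recomputed with the other influencers of $y$ clamped; to be fair, the paper's ``by inspection'' glosses over exactly the same point, but your proposal claims to have closed the gap and has not.
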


The final proposition gives the relationship between CF-IDXs and MD-/SD-IDXs.

\begin{proposition}\label{thm:CFXvsIDX}
    Given an MD-IDX $\langle \Args_r, \Atts, \Supps \rangle$, an SD-IDX $\langle \Args_r, \SDAtts, \SDSupps \rangle$ and a CF-IDX $\langle \Args'_r, \CIRels, \PIRels \rangle$ for 
    explanandum $e \in \Args_r$ 
    and input assignment $a \in \Inputs$, it holds that $\CIRels \subseteq \Supps$ and $\CIRels \subseteq \SDSupps$.
\end{proposition}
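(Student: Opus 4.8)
The plan is to prove the single inclusion $\CIRels \subseteq \Supps$ and then obtain $\CIRels \subseteq \SDSupps$ almost for free: Proposition~\ref{thm:DIDXtoSDIDX} already yields $\Supps \subseteq \SDSupps$ for the MD-IDX and SD-IDX sharing the explanandum $e$ and input $a$, so $\CIRels \subseteq \Supps \subseteq \SDSupps$ follows by transitivity. The whole argument therefore reduces to showing that every critical influence is a monotonic supporter. I would fix an arbitrary $(x,y) \in \CIRels$, abbreviate $\hat{y} = \SF(a,y)$, and establish two facts: (i) the monotonic support property holds, i.e.\ $P(\hat{y} \mid a) > P(\hat{y} \mid a'_{x_k})$ for every $x_k \in \Val(x) \setminus \{\SF(a,x)\}$; and (ii) $x,y \in \Args_r$, so that $(x,y)$ genuinely lies in $\Influences \cap (\Args_r \times \Args_r)$ and hence in $\Supps$.

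For (i), I would unfold Definition~\ref{def:CFIDX}: a critical influence forces every single-variable change of $x$, with all other influencers of $y$ held fixed, to flip the decision on $y$. Each modified input $a'_{x_k}$ of Definition~\ref{def:ModifiedInput} is exactly such a change, so $\SF(a'_{x_k},y) \neq \hat{y}$ for every $x_k \neq \SF(a,x)$; that is, $\hat{y}$ maximises the posterior of $y$ under $a$ but not under any $a'_{x_k}$. When the explained classifications are binary this is immediate: $\SF(a,y) = \hat{y}$ gives $P(\hat{y} \mid a) \geq \tfrac{1}{2}$, while $\SF(a'_{x_k},y) \neq \hat{y}$ gives $P(\hat{y} \mid a'_{x_k}) < \tfrac{1}{2}$, hence $P(\hat{y} \mid a) \geq \tfrac{1}{2} > P(\hat{y} \mid a'_{x_k})$ and $\prop_+((x,y),a) = true$. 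The weighted-average inequality defining $\prop_{\plusSTAT}$ then also holds, since an average of terms all below $\tfrac{1}{2}$ stays below $P(\hat{y} \mid a)$; this re-proves $\CIRels \subseteq \SDSupps$ directly, should one prefer not to route through Proposition~\ref{thm:DIDXtoSDIDX}.

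For (ii), I would use that the CF-IDX and the MD-IDX are produced by the same depth-first traversal of the influence graph from $e$ in Algorithm~\ref{algo:algorithm}, so relevance propagates backwards along included edges. I would induct on the length of a path from $y$ to $e$ in the CF-IDX: the base case is $y = e \in \Args_r$, and the inductive step converts, via part (i), each critical edge on the path into a support edge that Algorithm~\ref{algo:algorithm} includes in the MD-IDX, thereby making its source relevant. This gives $x,y \in \Args_r$ and closes the typechecking gap.

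The main obstacle is the categorical case of part (i). For a classification $y$ with $|\Val(y)| > 2$, the mere fact that $\hat{y}$ loses the maximum under $a'_{x_k}$ does not imply $P(\hat{y} \mid a'_{x_k}) < P(\hat{y} \mid a)$: the mass shed by $\hat{y}$ could be over-compensated by a rival value, so the decision flip alone is not enough. Here I would descend to the factorised posterior $P(y_j \mid a) \propto P(y_j) \prod_{z} P(a(z) \mid y_j)$ of the (independence-satisfying) BC and track how replacing the single factor tied to $x$ reweights the classes. I expect the clean inequality to require either binarity of the explained classifications, as holds for $r$ and $o$ in the running example, or a further structural assumption, and I would scope the statement accordingly; once part (i) is secured, part (ii) and the passage to $\SDSupps$ are routine.
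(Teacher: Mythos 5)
Your proposal follows essentially the same route as the paper's proof of the core inclusion $\CIRels \subseteq \Supps$: unfold $\prop_!$, observe that each modified input $a'_{x_k}$ (for $x_k \in \Val(x)\setminus\{\SF(a,x)\}$) is exactly an admissible change of $x$ with the other influencers of $y$ fixed, hence $\SF(a'_{x_k},y) \neq \SF(a,y)$, and then pass from this decision flip to the posterior inequality $P(\SF(a,y)\mid a) > P(\SF(a,y)\mid a'_{x_k})$, which is $\prop_+((x,y),a)=true$. The paper's proof does not even treat $\CIRels \subseteq \SDSupps$ separately, so your completion via Proposition~\ref{thm:DIDXtoSDIDX} and transitivity is the natural (implicit) one. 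Your part~(ii), the bookkeeping showing $x,y \in \Args_r$, is care that the paper skips entirely; note only that it requires reading the proposition as being about the maximal, algorithm-generated IDXs, and that your path induction stalls on paths that traverse potential influences, which do not convert to supports.

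The genuine divergence is at the one step you refused to take. Where you flag ``the mass shed by $\hat{y}$ could be over-compensated by a rival value'', the paper simply writes ``thus, it must be the case that'' $P(\SF(a,y)\mid a) > P(\SF(a,y)\mid a'_{x_k})$ --- i.e., it silently makes exactly the leap from ``the argmax changes'' to ``the posterior of the old argmax decreases''. Your skepticism is well founded: the leap is valid only in the binary case (your $\geq \tfrac12$ versus $< \tfrac12$ argument), and it fails for categorical influencees. Concretely, take an NBC with one three-valued class $y$ and one binary observation $x$, priors $(0.44, 0.42, 0.14)$ and conditionals $P(x_1\mid y_1)=0.5$, $P(x_1\mid y_2)=0.45$, $P(x_1\mid y_3)=0.99$. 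Under $a(x)=x_1$ the posteriors are proportional to $(0.22, 0.189, 0.139)$, so $\SF(a,y)=y_1$ with $P(y_1\mid a)\approx 0.40$; under $a'_{x_2}$ they are proportional to $(0.22, 0.231, 0.001)$, so the decision flips to $y_2$, making $(x,y)$ a critical influence (as $x$ is binary, this is the only change), yet $P(y_1\mid a'_{x_2})\approx 0.49 > 0.40$, so $\prop_+((x,y),a)=false$ and $(x,y)\notin\Supps$. So the proposition as stated fails for non-binary influencees; the gap is in the paper's proof, not in your proposal, and your suggestion to scope the statement (e.g., to binary $\Val(y)$, as indeed holds for the classifications $r$ and $o$ in the running example) is the correct repair.
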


\EAB{Proposition \ref{thm:CFXvsIDX} shows that the \emph{critical influence} relation of CF-IDXs is a stronger variant of the relation of monotonic support. In fact it requires the current value of an influencee to change (rather then only the reduction of its probability)\ARX{, given a change in the value of the influencer}.}

\subsection{Empirical Analysis}\label{sec:empirical}

\FT{We used} several datasets
\FT{/}Bayesian networks \FT{(see Table~\ref{table:datasets} for details)}, for each of which we deployed a\ARX{n} 
\FT{NBC} (for single-label classification dataset) or a 
\FT{BCC} (for multi-label classification datasets and non-shallow Bayesian networks). When training BCs from datasets, we split them into \delold{a stratified }train and test set\EAB{s} (with 75/25\% ratio) and optimised the hyper-parameters \delold{of the model }using 5-fold cross-validation \delold{on the train set }(see 
\PBX{Appendix B for details}).

\begin{table}[t]
	\begin{small}
		\begin{center}
		\begin{tabular}{r|cc|cc|cc|cc}
  \multirow{ 2}{*}{\!\!Dataset\!\!} & 
  \multirow{ 2}{*}{\!\!\!BC$^{\dagger}$\!\!\!} & 
  \multirow{ 2}{*}{\!\!Size\!\!} & 
  \multicolumn{2}{c}{\!\!Variables\!\!} &
  \multicolumn{2}{c}{\!\!Types$^{\ddagger}$\!\!} & 
  \multicolumn{2}{c}{\!\!Performance$^{\mathsection}$\!\!}\\
  & & & \!\!$|\Obs|$\!\! & \!\!$|\Cla|$\!\! & \!\!$\Obs$\!\! & \!\!$\Cla$\!\! &\!\!Accuracy\!\!& \!\!F1\!\!\\
\hline
 \!\!\textbf{Shuttle}\tablefootnote{\label{footnote:UCI}Machine Learning Repository \cite{UCIDatasets}} & \!\!NBC\!\! & \!\!278\!\! & \!\!6\!\! & \!\!1\!\! & \!\!C\!\! & \!\!B\!\! & \!\!95.7\%\!\! & \!\!0.96\!\!\\
 \!\!\textbf{Votes}
 \textsuperscript{\ref{footnote:UCI}}
 & \!\!NBC\!\! & \!\!435\!\! & \!\!16\!\! & \!\!1\!\! & \!\!B\!\! & \!\!B\!\! & \!\!90.8\%\!\! & \!\!0.90\!\!\\
 \!\!\textbf{Parole}\tablefootnote{National Archive of Criminal Justice Data \cite{ParoleDataset}} & \!\!NBC\!\! & \!\!675\!\! & \!\!8\!\! & \!\!1\!\! & \!\!C\!\! & \!\!B\!\! & \!\!88.8\%\!\! & \!\!0.69\!\!\\
 \!\!\textbf{German}
 \textsuperscript{\ref{footnote:UCI}}
 & \!\!NBC\!\! & \!\!750\!\! & \!\!20\!\! & \!\!1\!\! & \!\!C\!\! & \!\!B\!\! & \!\!76.4\%\!\! & \!\!0.72\!\!\\
 \!\!\textbf{COMPAS}\tablefootnote{ProRepublica Data Store \cite{COMPASDataset}} & \!\!NBC\!\! & \!\!6951\!\! & \!\!12\!\! & \!\!1\!\! & \!\!C\!\! & \!\!B\!\! & \!\!70.5\%\!\! & \!\!0.71\!\!\\
 \!\!\textbf{Car}
 \textsuperscript{\ref{footnote:UCI}}
 & \!\!NBC\!\! & \!\!1728\!\! & \!\!6\!\! & \!\!1\!\! & \!\!C\!\! & \!\!C\!\! & \!\!86.6\%\!\! & \!\!0.76\!\!\\
 \!\!\textbf{Emotions}\tablefootnote{Multi-Label Classification Dataset Repository \cite{UCODatasets}} & \!\!BCC\!\! & \!\!593\!\! & \!\!72\!\! & \!\!6\!\! & \!\!C\!\! & \!\!B\!\! & \!\!80.2\%\!\! & \!\!0.70\!\!\\
  \!\!\textbf{Asia}\tablefootnote{\label{footnote:bnlearn}Bayesian Network Repository \cite{BNLearnDatasets}} & \!\!BCC\!\! & \!\!4\!\! & \!\!2\!\! & \!\!6\!\! & \!\!B\!\! & \!\!B\!\! & \!\!100\%\!\! & \!\!1.00\!\!\\
 \!\!\textbf{Child}
 \textsuperscript{\ref{footnote:bnlearn}}
 & \!\!BCC\!\! & \!\!1080\!\! & \!\!7\!\! & \!\!13\!\! & \!\!C\!\! & \!\!C\!\! & \!\!80.6\%\!\! & \!\!0.66\!\!\\
\end{tabular}
		\end{center}
    	\protect\caption{Characteristics of the datasets used in the evaluation.
    	($\dagger$) NBC ({N}aive 
    	\FT{BC}) or BCC ({B}ayesian {C}hain {C}lassifier); ($\ddagger$) \textbf{B}inary or \textbf{C}ategorical; ($\mathsection$) accuracy and macro F1 score on the {test} set, averaged for multi-label settings.}\label{table:datasets}
    \end{small}
\end{table}

\textbf{Computational cost.}
\EAB{MD-IDX\ARX{s} and SD-IDX\ARX{s} are model-specific explanations and can thus access the internal state of the 
\ARX{BC}. This allows them to be more efficient in extracting information than model-agnostic methods, e.g., methods based on the sampling of the input space.}
Formally, let $t_p$ be the time to compute a prediction and its associated posterior probabilities \FTx{(}in our experiments\footnote{We used a machine with 
 \textit{Intel i9-9900X} at $3.5 Ghz$ and $32 GB$ of RAM with no GPU acceleration. 
\FT{For BCCs we did not use production-ready code optimized for performances.}}, $t_p$ ranged from $3 \mu s$ for the simplest NBC to $40 ms$ for the more complex BCC\FTx{)}. 
The time complexity to compute 
\FT{whether an} influence $(x,y)$ \FT{belongs to a relation in} MD-IDXs or \ARX{i}n SD-IDXs
\EAX{The time complexity to compute 
\FT{whether an} influence $(x,y)$ \FT{belongs to a relation in} MD-IDXs or an SD-IDXs, denoted as $T_{1-IDX}$, is a function of the number of values that $x$ can be assigned to (i.e, $|\Val(x)|$) because in order to assess that, the algorithm has to check how the posterior probability of $y$ changes when changing $x$ (as \ARX{per} 
Defs.~\ref{def:DIDX} and \ref{def:SDIDX}).}

\(
    T_{1-IDX}(\Val(x)) \!\!=\!\! \Theta \left( t_p \cdot [1 \!+\! |\Val(x)| - 1] \right) \!\!=\!\! \Theta \left( t_p \cdot |\Val(x)| \right)
\).
\\
\EAX{The time complexity to compute a full MD/SD-IDX, denoted as $T_{IDX}$, is the sum of the above for all the influences, with some potential savings that can be achieved depending on the specific type of BC that is being used (e.g., in BCC we can decompose the prediction into multiple NBCs, therefore leading to a lower cost).} Thus\EAX{, formally, }the time complexity to \EAX{compute} 
\EAX{an }MD-IDX \EAX{or an }SD-IDX is:

\(
    T_{IDX}(\Influences, \Val) = \Theta\left(
    t_p \cdot \left[1 +  \sum_{x \in \{x | (x,y) \in \Influences \}} (|\Val(x)| - 1) \right]
    \right)
\).
\\
The \EAX{asymptotic complexity can be further simplified to}:

\(
    T_{IDX}(\Val) = \Theta \left(
        t_p \cdot \sum_{x \in \Args} |\Val(x)|
    \right)
\)
\\
\FT{which} is \emph{linear} with respect to the \EAB{sum of the }number of variables' values
. This makes our dialectical explanations \emph{efficient}. \EAB{As a comparison, the time taken to generate an MD-IDX for the \textit{Child BC} is less than $60 \cdot t_p$ while the time taken to generate a LIME explanation with default parameters is $5000 \cdot t_p$ because LIME \EAX{asks the BC for the prediction of} a sample of 5000 inputs before generating an explanation.}

\textbf{Stability.} 
\EAB{All 
\ARX{IDXs}
are \emph{unique} (for each input assignment-explanandum pair), but, u}nlike LIME and SHAP, MD-IDXs, SD-IDXs \FT{and CF-IDXs} use deterministic methods \EAB{(i.e., not based on sampling)} 
\FT{and thus} \EAB{their explanations} are \EAB{also} \emph{stable} \ARX{\cite{Sokol_20}}.

\begin{table*}[t]
	\begin{small}
		\begin{center}
		   \begin{tabular}{r|ccc|ccc|ccc|cc|cc}
\multirow{2}{*}{Dataset} & \multicolumn{3}{c}{SD-IDX} & \multicolumn{3}{c}{MD-IDX}  & \multicolumn{3}{c}{CF-IDX} & \multicolumn{2}{c}{LIME-IDX} & \multicolumn{2}{c}{SHAP-IDX}\\
                       & \!\!$\Rels_{\dot{+}}$\!\! & \!\!$\Rels_{\dot{-}}$\!\! & \!\!$\Rels_{\dot{-}\dot{+}}^{\Cla}$\!\! & \!\!$\Rels_{+}$\!\! & \!\!$\Rels_{-}$\!\! & \!\!$\Rels_{-+}^{\Cla}$\!\! & \!\!$\Rels_!$\!\! & \!\!$\Rels_*$\!\! & \!\!$\Rels_{*!}^{\Cla}$\!\! & \!\!$\SuppsLIME$\!\! & \!\!$\AttsLIME$\!\! & \!\!$\SuppsSHAP$\!\! & \!\!$\AttsSHAP$\!\! \\
\hline
  \!\textbf{Shuttle}\! &                 \!59.0\%\! &                 \!41.0\%\! &                            \!$\times$\! &           \!51.2\%\! &           \!32.4\%\! &                \!$\times$\! &         \!17.9\%\! &         \!46.0\%\! &                \!$\times$\! &         \!59.8\%\! &         \!40.2\%\! &         \!61.9\%\! &         \!38.1\%\! \\ 
    \!\textbf{Votes}\! &                 \!77.1\%\! &                 \!22.9\%\! &                            \!$\times$\! &           \!77.1\%\! &           \!22.9\%\! &                \!$\times$\! &          \!3.0\%\! &         \!74.1\%\! &                \!$\times$\! &         \!77.1\%\! &         \!22.9\%\! &         \!73.2\%\! &          \!7.3\%\! \\
   \!\textbf{Parole}\! &                 \!61.5\%\! &                 \!38.5\%\! &                            \!$\times$\! &           \!38.5\%\! &           \!27.4\%\! &                \!$\times$\! &          \!2.1\%\! &         \!70.5\%\! &                \!$\times$\! &         \!55.6\%\! &         \!44.4\%\! &         \!56.4\%\! &         \!43.6\%\! \\
   \!\textbf{German}\! &                 \!59.3\%\! &                 \!40.7\%\! &                            \!$\times$\! &           \!29.6\%\! &           \!22.0\%\! &                \!$\times$\! &      \!$\times$\! &      \!$\times$\! &                \!$\times$\! &         \!55.9\%\! &         \!44.1\%\! &         \!46.9\%\! &         \!36.4\%\! \\
   \!\textbf{COMPAS}\! &                 \!67.0\%\! &                 \!33.0\%\! &                            \!$\times$\! &           \!45.4\%\! &           \!20.3\%\! &                \!$\times$\! &      \!$\times$\! &      \!$\times$\! &                \!$\times$\! &         \!65.7\%\! &         \!34.3\%\! &         \!35.6\%\! &         \!19.1\%\! \\
      \!\textbf{Car}\! &                 \!57.4\%\! &                 \!42.6\%\! &                            \!$\times$\! &           \!39.3\%\! &           \!21.1\%\! &                \!$\times$\! &         \!14.2\%\! &         \!66.4\%\! &                \!$\times$\! &         \!55.3\%\! &         \!44.7\%\! &         \!55.9\%\! &         \!44.1\%\! \\
 \!\textbf{Emotions}\! &                 \!56.9\%\! &                 \!24.0\%\! &                                \!1.1\%\! &           \!10.3\%\! &            \!5.4\%\! &                    \!1.1\%\! &          \!1.9\%\! &          \!1.5\%\! &                \!$\times$\! &         \!60.6\%\! &         \!39.4\%\! &         \!56.8\%\! &         \!10.3\%\! \\
    \!\textbf{Child}\! &                 \!77.5\%\! &                 \!22.5\%\! &                               \!64.0\%\! &           \!65.4\%\! &           \!15.1\%\! &                   \!64.0\%\! &         \!34.6\%\! &         \!27.6\%\! &                   \!64.0\%\! &         \!54.0\%\! &         \!41.3\%\! &         \!24.4\%\! &          \!9.7\%\! \\
     \!\textbf{Asia}\! &                 \!87.5\%\! &                 \!12.5\%\! &                               \!62.5\%\! &           \!87.5\%\! &           \!12.5\%\! &                   \!62.5\%\! &         \!46.9\%\! &          \!9.4\%\! &                   \!62.5\%\! &         \!70.8\%\! &         \!29.2\%\! &         \!54.2\%\! &         \!20.8\%\! \\
\end{tabular}
        \end{center}
    	\protect\caption{Average percentage of the influences that \AR{constitute} 
    	relation\FT{s} 
    	for several 
    	\FT{instantiated IDXs}. 
    	\FT{For any relation type $t$, } $\Rels^{\Cla}_{
    	\FT{t}} = \{(x,y) \in 
    	\FT{\Rels_t} | x, y \in \Cla \}$. $\times$ indicates results that cannot be computed in these settings or that \AR{must be} 
    	0 due to the BC type. 
    	\EAB{Note that percentages may not sum to $100\%$ because not all the influences are categorised into relations.}}\label{table:prevalence_of_relations}
    \end{small}
\end{table*}

\begin{figure}[ht!]
\centering
    \includegraphics[width=\linewidth]{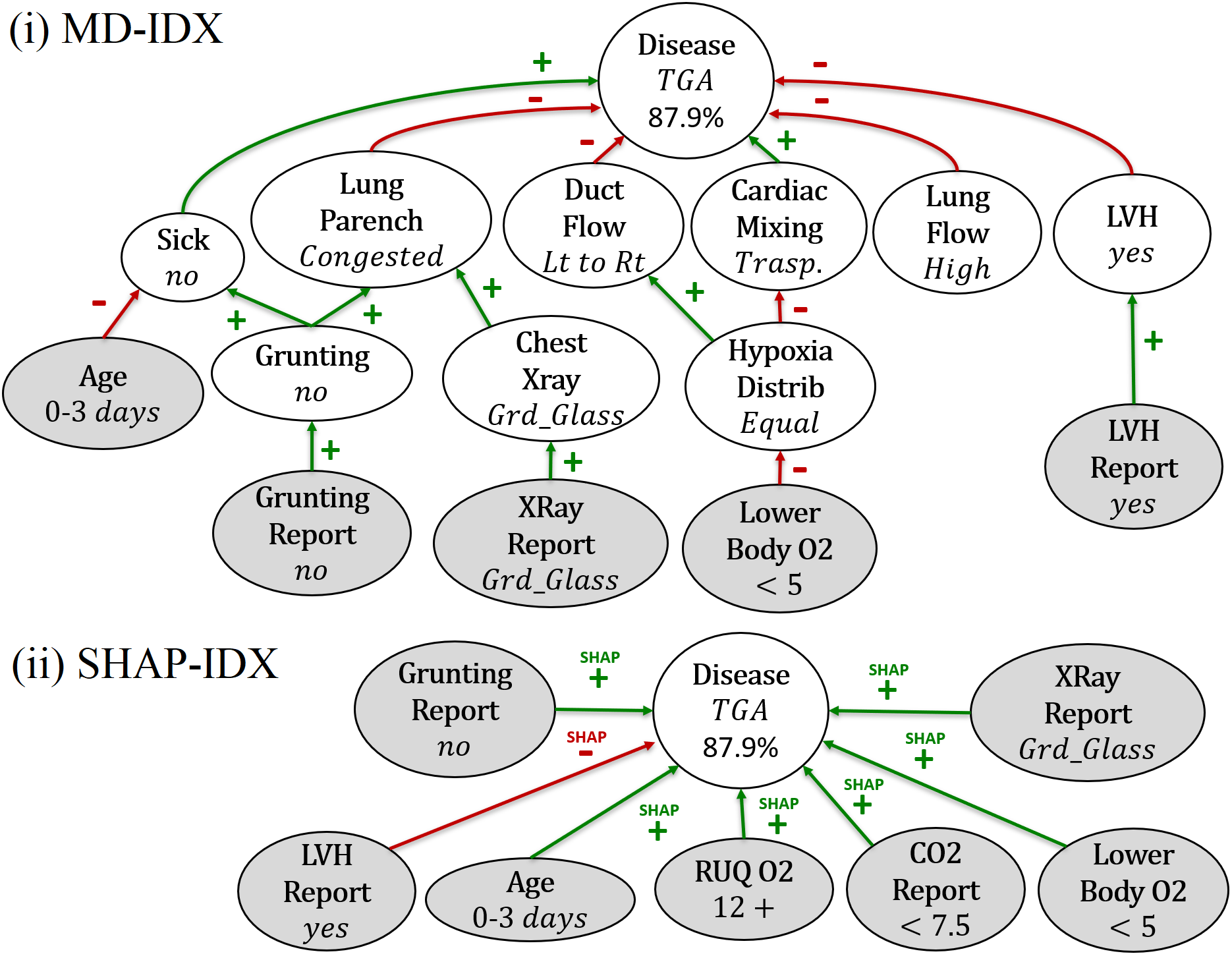}
    \protect\caption{Example MD-IDX (i) and SHAP-IDX (ii), in graphical form, for \ARR{explanandum} \textit{Disease} in the \textit{Child} BCC.  
    Each node represents a variable with 
    the assigned/estimated value in italics
    . Grey and white nodes indicate observations and classifications, \resp\  $+$/${\protect\plusSHAP}$ and $-$/${\protect\minusSHAP}$ indicate supports and attacks, \resp\ Note that other ways to present IDXs to users may be suitable in some settings: this is left for future work.}
    \label{fig:IDXeg}
\end{figure}

\textbf{Size of the explanations.} \EAB{In order to \emph{understand how many influences translated into relations and their prevalence} w}e 
\EAB{calculated the number of} relations of \AR{each of the \FT{instantiated} IDXs from Section \ref{sec:IDX}}; the results are reported in Table~\ref{table:prevalence_of_relations}\EAB{\ARX{, showing} the percentage of influences that are categorised into each relation}. 
We 
\FT{note} that: 
\textbf{(1)}
MD-IDXs, SD-IDXs \EA{and CF-IDX} are non-shallow, thus, 
\FT{when non-naive BCs are}
used, they also find relationships between pairs of classifications (see $\Rels_{-+}^{\Cla}$, $\Rels_{\minusSTAT\plusSTAT}^{\Cla}$ \EA{and $\Rels_{*!}^{\Cla}$} in Table~\ref{table:prevalence_of_relations}) that, depending on the structure of the Bayesian network underlying the BC, can be also numerous, e.g., for 
\FT{\textit{Child} and \textit{Asia}}\EAB{; this shows that our notion of influences provide a deeper insight into the intermediate variables of the model, that are otherwise neglected by \emph{input-output influences}};
\textbf{(2)} 
SD-IDXs and LIME-IDXs tend to be\FT{have} similar\FT{ly here}, while MD-IDXs 
\EA{tend to \FT{include fewer influences} 
than SD-IDXs 
(in line with Proposition \ref{thm:DIDXtoSDIDX}), which in turn 
\FT{include fewer influences}
than critical influences in CF-IDXs (in line with Proposition \ref{thm:CFXvsIDX})}; note that, CF-IDXs could not be computed in all settings because their computational complexity is \textit{exponential} \EA{\wrt\ the cardinality of the 
variables\FT{' sets of values}, while the complexity of SM-IDXs and MD-IDXs is \textit{linear} (
\FT{see \PBX{above}})}; 
\textbf{(3)} 
in some settings, SHAP-IDXs 
\AR{fail} to capture the majority of 
attacks captured by the other dialectical explanations (e.g., 
\FT{for} \textit{Votes} and \textit{Emotions}
).

\begin{table*}[t]
	\begin{small}
		\begin{center}

\begin{tabular}{r|cc|cc|cc|cc}
\multirow{2}{*}{Dataset} &     
\multicolumn{2}{c}{MD-/LIME-IDX} &   
\multicolumn{2}{c}{MD-/SHAP-IDX} &
\multicolumn{2}{c}{SD-/LIME-IDX} &  
\multicolumn{2}{c}{MD-/SHAP-IDX} \\
                         &  
                         \!\!Agree\!\!  &
                         \!\!Disagree\!\!  &
                         \!\!Agree\!\!  &
                         \!\!Disagree\!\!  &
                         \!\!Agree\!\!  &
                         \!\!Disagree\!\!  &
                         \!\!Agree\!\!  &
                         \!\!Disagree\!\!  \\
\hline
\textbf{Shuttle}  & 
\!\!82.4\%\!\! & 
\!\!17.6\%\!\! & 
\!\!83.6\%\!\! & 
\!\!16.4\%\!\! &
\!\!98.8\%\!\! &  
\!\!1.2\%\!\! & 
\!\!97.1\%\!\! &  
\!\!2.9\%\!\! \\
\textbf{Votes}    & 
\!\!99.1\%\!\! &  
\!\!0.9\%\!\! & 
\!\!78.8\%\!\! & 
\!\!21.2\%\!\! &
\!\!99.1\%\!\! &  
\!\!0.9\%\!\! & 
\!\!78.8\%\!\! & 
\!\!21.2\%\!\! \\
\textbf{Parole}   & 
\!\!65.8\%\!\! & 
\!\!34.2\%\!\! & 
\!\!65.8\%\!\! & 
\!\!34.2\%\!\! &
\!\!91.1\%\!\! &  
\!\!8.9\%\!\! & 
\!\!94.9\%\!\! &  
\!\!5.1\%\!\! \\
\textbf{German}   & 
\!\!50.1\%\!\! & 
\!\!49.9\%\!\! & 
\!\!38.2\%\!\! & 
\!\!61.8\%\!\! &
\!\!83.6\%\!\! & 
\!\!16.4\%\!\! & 
\!\!73.5\%\!\! & 
\!\!26.5\%\!\! \\
\textbf{COMPAS}   & 
\!\!65.7\%\!\! & 
\!\!34.3\%\!\! & 
\!\!20.4\%\!\! & 
\!\!79.6\%\!\! &
\!\!98.3\%\!\! &  
\!\!1.7\%\!\! & 
\!\!54.1\%\!\! & 
\!\!45.9\%\!\! \\
\textbf{Car}      & 
\!\!59.1\%\!\! & 
\!\!40.9\%\!\! & 
\!\!60.3\%\!\! & 
\!\!39.7\%\!\! &
\!\!94.7\%\!\! &  
\!\!5.3\%\!\! & 
\!\!97.1\%\!\! &  
\!\!2.9\%\!\! \\
\textbf{Emotions} & 
\!\!15.0\%\!\! & 
\!\!85.0\%\!\! & 
\!\!12.3\%\!\! & 
\!\!87.7\%\!\! &
\!\!65.0\%\!\! & 
\!\!35.0\%\!\! & 
\!\!51.9\%\!\! & 
\!\!48.1\%\!\! \\
\textbf{Child}    &  
\!\!8.3\%\!\! & 
\!\!91.7\%\!\! &  
\!\!8.3\%\!\! & 
\!\!91.7\%\!\! &
\!\!9.2\%\!\! & 
\!\!90.8\%\!\! &  
\!\!9.3\%\!\! & 
\!\!90.7\%\!\! \\
\textbf{Asia}     & 
\!\!25.0\%\!\! & 
\!\!75.0\%\!\! & 
\!\!25.0\%\!\! & 
\!\!75.0\%\!\! &
\!\!25.0\%\!\! & 
\!\!75.0\%\!\! & 
\!\!25.0\%\!\! & 
\!\!75.0\%\!\! 
\end{tabular}

        \end{center}
    	\protect\caption{
    	\FT{(Dis)Agreement between MD-IDXs\ARX{/}\EAB{SD-IDXs} and} LIME-IDXs\EAB{/}SHAP-IDXs. 
    	Agreement means 
    	identifying the same (possibly empty) relations 
    	for the same influences\EAB{, in practice ${|\Rels_{\cdot}|}/{|I|}$, where $\cdot$ is the relation type and, accordingly with Defs.~\ref{def:Influences}~and~\ref{def:IOInfluences}, $I=\Influences$ for SD-/MD-/CF-IDX and $I=\Influences_{io}$ for LIME-/SHAP-IDX.}}\label{table:overlap}
    \end{small} 
\end{table*}

\textbf{Agreement of Dialectical Explanations.} \EAB{To show \emph{how our explanations differ from existing dialectical explanations} w}e compared the 
relations 
\FT{in} MD-IDXs/SD-IDXs with those \FT{in} LIME-IDXs/SHAP-IDXs, analysing how often they agree \EAB{in }
identify\EAB{ing} attack\EAB{s} 
\EAB{or} support\EAB{s} 
between \EAB{observations and classifications}
. \FT{Table~\ref{table:overlap} shows the results \EAB{in percentage for each pair, e.g, between MD-IDX and LIME-IDX $|(\Atts \cap \AttsLIME) \cup (\Supps \cap \SuppsLIME)|/|\Influences_{io}|$}
. We note that: }
\textbf{(1)}
MD-IDXs agree on average $52.30\%$ and $43.6\%$ of the \ARX{time} while SD-IDXs agree on average $73.9\%$ and $64.63\%$ of \ARX{the time} with LIME-IDXs and SHAP-IDXs, respectively
, due to MD-IDXs' stronger constraints on the selection of attacks and supports; 
\textbf{(2)}
\FT{when} a BCC with many classifications is used (as in \textit{Child}, \textit{Asia}, \textit{Emotions}), the agreement decreases considerably,  due to 
LIME-IDXs and SHAP-IDXs 
\FT{being shallow, and thus} 
\FT{selecting} different 
\FT{influences from non-shallow MD-IDXs and SD-IDXs,} as described in Section~\ref{sec:otherdialecticalIDX} \EA{and exemplified by Figure~\ref{fig:IDXeg}}.

\begin{table}[t]
\begin{small}
\begin{center}
\begin{tabular}{r|c|c|c}
Dataset & SM-IDX & LIME-IDX & SHAP-IDX\\
\hline
\textbf{Shuttle}  &   \!\!5.8\%\!\! &   \!\!6.8\%\!\! &   \!\!6.5\%\!\! \\
\textbf{Votes}    &   \!\!0.0\%\!\! &   \!\!0.2\%\!\! &   \!\!0.1\%\!\! \\
\textbf{Parole}   &  \!\!13.3\%\!\! &  \!\!13.2\%\!\! &  \!\!13.8\%\!\! \\
\textbf{German}   &  \!\!18.5\%\!\! &  \!\!20.8\%\!\! &  \!\!19.8\%\!\! \\
\textbf{COMPAS}   &  \!\!12.3\%\!\! &  \!\!12.5\%\!\! &  \!\!22.7\%\!\! \\
\textbf{Car}      &  \!\!16.8\%\!\! &  \!\!18.5\%\!\! &  \!\!17.4\%\!\! \\
\textbf{Emotions} &  \!\!12.0\%\!\! &  \!\!11.9\%\!\! &   \!\!8.9\%\!\! \\
\textbf{Child}    &   \!\!7.1\%\!\! &   \!\!2.5\%\!\! &   \!\!5.6\%\!\! \\
\textbf{Asia}     &   \!\!0.0\%\!\! &   \!\!0.0\%\!\! &   \!\!0.0\%\!\! \\

\end{tabular}
\end{center}
\protect\caption{Average percentage of \FT{influences in} relations \textit{not} satisfying the dialectical monotonicity property, obtained with a sample of 25,000 \FT{influences in} relations 
for 250 \EA{data-points}.}\label{table:monotonicity}
\end{small}
\end{table}

\textbf{\ARX{Satisfaction of} Dialectical Monotonicity
.} \EA{
\FT{We} checked the number of \FT{influences in} relations \AR{in CF-IDXs, LIME-IDXs and SHAP-IDXs which do not satisfy \emph{dialectical monotonicity} 
\ARR{(}Property \ref{prop:monotonicity}\ARR{),} 
\FT{\ARR{which holds} for MD-IDXs}}. \FT{The results are in}  Table~\ref{table:monotonicity}. \FT{We note} that:
\textbf{(1)}
SM-IDXs violate the dialectical monotonicity constraint significantly ($p < 0.05$) less than other methods for all the NBCs
, while their increase in the number of violations of dialectical monotonicity for BCCs 
is due to SM-IDXs \FT{being} 
non-shallow
, unlike 
LIME-IDXs and SHAP-IDXs
; 
\textbf{(2)} 
all \AR{three} methods violate the dialectical monotonicity constraint.}
The violation of 
dialectical monotonicity 
gives rise to counter-intuitive results from a dialectical perspective.
Consider the example in Figure \ref{fig:egBC}: 
if \emph{raining} is considered to be an output, LIME draws a negative contribution from \emph{windy} to \emph{raining}, which 
\FT{makes little sense} given the structure of the BC (this also indicates the benefits of incorporating a model's structure into IDXs via influences).
Further, for the SHAP-IDX in Figure \ref{fig:IDXeg}ii, 
\AR{changing the value of \textit{Disease}'s supporter, \textit{Age}, \FT{so that it is no longer a supporter,}}
results in the probability of \textit{Disease} being \textit{TGA} increasing.
\section{Conclusions}
\label{sec:conc}

We have introduced IDXs, a general formalism for producing various forms of 
explanations for BCs.
We have demonstrated IDXs' versatility by 
\FT{giving novel forms of dialectical explanations as IDX instances}, based on the principle of monotonicity, 
and integrating existing 
\FT{notions of explanation}
.
We have 
performed a wide-ranging evaluation 
\FT{with} theoretical \ARX{and} empirical 
analyses of IDXs 
with respect to existing methods, 
identifying some 
\FT{
advantages of IDXs with non-naive BCs}
.

\delold{Given that IDXs offer a new perspective on explanation methods, providing a deeper, model-specific representation of a BC, there are numerous directions we would like to explore for future work. A first step could be exploring the aforementioned hybrid of attribution methods and the deeper structure afforded by IDXs, while exploring the general concept of IDXs for other AI methods could also be fruitful.
Another important line of work would be the extension of the user studies to investigate how effectively our explanations function \wrt\ a user's experience, e.g., experimenting with different styles and formats of IDX.} 

\section*{Acknowledgements}
This research was funded in part by J.P. Morgan and  by the Royal Academy of Engineering under the Research Chairs and Senior Research Fellowships scheme.  Any views or opinions expressed herein are solely those of the authors listed, and may differ from the views and opinions expressed by J.P. Morgan or its affiliates. This material is not a product of the Research Department of J.P. Morgan Securities LLC. This material should not be construed as an individual recommendation for any particular client and is not intended as a recommendation of particular securities, financial instruments or strategies for a particular client.  This material does not constitute a solicitation or offer in any jurisdiction.

\bibliographystyle{named}
\bibliography{bib}


\section*{Appendix A: Theoretical analysis proofs}

\textbf{Proposition \ref{thm:influences}.}
\begin{proof}
    If $\Influences = \Influences_{io}$ then from Defs. 2 and 3, $\{ (x,c) \in \Args \times \Cla | (c,x) \in \Dep \} = \Obs \times \Cla_o$ and it follows that $\Dep = \Cla_o \times \Obs$.
    If $\Dep = \Cla_o \times \Obs$ then from Defs. 2 and 3, $\Influences = \Obs \times \Cla_o = \Influences_{io}$.
\end{proof}

\noindent\textbf{Proposition \ref{thm:DIDXtoSDIDX}.}
\begin{proof}
For any $(x,y) \in \Influences$ and $a \in \Inputs$, by Def. 7, if $\prop_-((x,y), a) = true$ then 
$\forall x_k \in \Val(x) \setminus \{ \SF(a, x) \},
        P(\SF(a, y) | a)
        <
        P(\SF(a, y) | a'_{x_k})$.
        
Then we get:
$\frac{
        \sum\limits_{x_k \in \Val(x) \setminus \{ \SF(a, x) \}} \left[ P(x_k) \cdot
        P(\SF(a, y) | a'_{x_k}) \right]
        }{
        \sum\limits_{x_k \in \Val(x) \setminus \{ \SF(a, x) \}}  P(x_k) 
        } \geq 
        \frac{
        \sum\limits_{x_k \in \Val(x) \setminus \{ \SF(a, x) \}} \left[ P(x_k) \cdot
       \min_{x_k \in \Val(x) \setminus \{ \SF(a, x) \}} P(\SF(a, y) | a'_{x_k}) \right]
        }{
        \sum\limits_{x_k \in \Val(x) \setminus \{ \SF(a, x) \}}  P(x_k) }    =   \min_{x_k \in \Val(x) \setminus \{ \SF(a, x) \}} P(\SF(a, y) | a'_{x_k})   > P(\SF(a, y) | a)$.

Then by Def. 8, $\prop_{\minusSTAT}((x,y), a) = true$.
The proof for $\Supps \subseteq \SDSupps$ is analogous.
It follows from Def. 7 that $\Args_r \subseteq \Args'_r$.
\end{proof}

\noindent\textbf{Proposition \ref{thm:equivalence}.}
\begin{proof}
    If for any $x \in \Args'_r \setminus \{ e \}$, $|\Val(x)| = 2$ then from Defs. 7 and 8 $\frac{
        \sum\limits_{x_k \in \Val(x) \setminus \{ \SF(a, x) \}} \left[ P(x_k) \cdot P(\SF(a, y) | a'_{x_k}) \right]}
        {\sum\limits_{x_k \in \Val(x) \setminus \{ \SF(a, x) \}}  P(x_k)  } 
        = 
        \frac{P(x_k) \cdot P(\SF(a, y) | a'_{x_k})}
        {P(x_k)}
        =
        P(\SF(a, y) | a'_{x_k})$, where $x_k$ is the only element of $\Val(x) \setminus \{ \SF(a, x) \}$. 
        Thus, $\prop_{-} = \prop_{\minusSTAT}$ and $\prop_{+} = \prop_{\plusSTAT}$, giving $\Atts = \SDAtts$ and $\Supps = \SDSupps$, \resp\ It follows from Def. 7 that $\Args_r = \Args'_r$.
\end{proof}

\noindent\textbf{Proposition \ref{thm:monotonicity}.}
\begin{proof}
    \AR{Monotonically dialectical explanation kits are} 
    monotonic by inspection of Definition 7. \AR{Stochastically dialectical, LIME and SHAP explanation kits} \EA{
    are not monotonic as proved by the results in Table 6.}
\end{proof}

\noindent\textbf{Proposition \ref{thm:CFXvsIDX}.}
\begin{proof}
For any $(x,y) \in \Influences$ and $a \in \Inputs$, by Def. 9, if $\prop_!((x,y), a) = true$ then 
$\forall a' \in \Inputs$ such that $\SF(a', \arga) \neq \SF(a, \arga)$ and $\forall \argc \in \Influences(\argb) \backslash \{ \arga \}$ $\SF(a', \argc) = \SF(a, \argc)$,
it holds that $\SF(a, \argb) \neq \SF(a', \argb)$.
Thus, it must be the case that 
$\forall x_k \in \Val(x) \setminus \{ \SF(a, x) \},
        P(\SF(a, y) | a)
        >
        P(\SF(a, y) | a'_{x_k})$
and so, by Def. 7, $\prop_+((x,y), a) = true$.
\end{proof}

\section*{Appendix B: Empirical Experiments Setup}

\subsection*{Datasets and dataset splits}
As reported is Table \ref{table:datasets_details} we ran experiments in several settings. We divided the datasets in a randomly stratified train and test sets with a split 75/25\% split. When the data source was a Bayesian network we artificially generated the corresponding combinatorial dataset using variable elimination (an exact inference algorithm for Bayesian network).

\begin{table}[t]
	\begin{small}
		\begin{center}
		\begin{tabular}{r|cc|cc|cc|cc}
Setting & Data Source$^{*}$  & Classifier Implementation$^{\dagger}$ \\
\hline
\textbf{Shuttle} & DT & \code{skl}\\
\textbf{Parole} & DT & \code{skl}\\
\textbf{German} & DT & \code{skl}\\
\textbf{COMPAS} & DT & \code{skl}\\
\textbf{Car} & DT & \code{skl}\\
\textbf{Emotions} & DT & Chain of \code{skl}\\
\textbf{Asia} & BN & Chain of \code{pgm}\\
\textbf{Child} & BN & Chain of \code{pgm}\\
\end{tabular}
		\end{center}
    	\protect\caption{Experimental settings details
    	($*$) \textbf{D}a\textbf{T}aset (DT) or \textbf{B}ayesian \textbf{N}etwork (BN); ($\dagger$) \code{sklearn}.\code{CategoricalNB} (\code{skl}) or \code{pgmpy}.\code{BayesianModel} (\code{pgp});  }\label{table:datasets_details}
    \end{small}
\end{table}

Note that the experiments on the explanations were run on the test set, and if the number of samples in the test set was greater than 250 samples, we ran them on a random sample of 250 samples.

 
 \subsection*{Execution details}
LIME explanations were generated using default parameters. 
SHAP explanations were generated using default parameters.
For all the random computations (\code{sklearn}, \code{pandas}, \code{numpy}, \code{random}) we used a random seed of $0$.


\subsection*{Software Platform and Computing Infrastructure}
To run our experiments, we used a machine with Ubuntu 18.04, a single Intel i9-9900X processor, 32GB of RAM and no GPU acceleration. We used a Python 3.6 environment with {networkx} 2.2, pgmpy 0.1.11, {scikit-learn} 0.22.1, {shap} 0.35.0 and {lime} 0.1.1.37 as software platform to run our experiments.

\subsection*{Hyper-parameter training}
The classifier \code{sklearn} \code{CategoricalNB} has 2 hyper-parameters: $\alpha \in \mathbb{R}$ (Laplace smoothing coefficient) and a tuple of probabilities $\beta = \langle \beta_1, \ldots \beta_m \rangle$ (classes prior probabilities) such that $\beta_i \in [0, 1]$ and $m$ is the number of classes. 
\code{pgmpy} \code{BayesianModel} has no learning parameters (all probabilities are given in the Bayesian network, they do not need to be learnt).
Datasets with numeric features have more hyper-parameters: the type of bins to make numeric variables categorical, that we denote as $\gamma \in \Gamma = \{  SS, SL, custom \}$ where $SS$ denotes Same Size (i.e. each bucket has the same number of elements from the dataset), $SL$ denotes Same Length (i.e. each bucket covers an interval of the same length) and $custom$ denotes a custom discretization, the number of buckets used in the discretization, denoted with $\delta_i$ for each numeric feature $i$.

\textbf{Shuttle}. $\alpha = 5$, $\beta = \langle 0.8, 0.2 \rangle$.

\textbf{Votes}. $\alpha = 1$, $\beta = \langle 0.4, 0.6 \rangle$.

\textbf{Parole}.$\alpha = 0.1$, $\beta = \langle 0.9, 0.1 \rangle$, $\gamma = SL$, $\delta_{time served} = 2$, $\delta_{max sentence} = 11$, $\delta_{age} = 9$.

\textbf{German}. $\alpha = 0.00001$, $\beta = \langle 0.35, 0.65 \rangle$, $\gamma = SL$, $\delta_{age} = 10$, $\delta_{amount} = 10$, $\delta_{duration} = 9$.

\textbf{COMPAS}. $\alpha = 0.1$, $\beta = auto$, $\gamma = custom$. We used a custom discretization for the dataset obtained with manual trials because it was performing better than automatic ones (same size or same length bins).

\textbf{Car}. $\alpha = 1$, $\beta = \langle 0.2, 0.1, 0.6, 0.1 \rangle$.

\textbf{Emotions}. $\alpha = 0.00001$, $\beta = \langle 0.6, 0.4 \rangle$, $\gamma = SS$, $\delta_i = 9$ for all the features. 
We built the classes tree using the mutual information coefficient as described in \cite{ChainClassifiers}, but instead of creating an ensemble of chain classifiers we considered only a single one and we considered the root of the tree as an hyper-parameter, (\code{amazed-lonely} was deemed as root in our case).

\end{document}